\def\BibTeX{{\rm B\kern-.05em{\sc i\kern-.025em b}\kern-.08em
    T\kern-.1667em\lower.7ex\hbox{E}\kern-.125emX}}
\newtheorem{theorem}{Theorem}[section]
\newtheorem{lemma}[theorem]{Lemma}
\newtheorem{proposition}[theorem]{Proposition}
\newtheorem{corollary}[theorem]{Corollary}
\newtheorem{remark}{Remark}
\newcommand{\argmax}{\mathop{\rm arg~max}\limits}
\newcommand{\argmin}{\mathop{\rm arg~min}\limits}
\title{\LARGE \bf NODA-MMH: Certified Learning-Aided Nonlinear Control for Magnetically-Actuated Swarm Experiment Toward On-Orbit Proof}
\author{%
Yuta Takahashi$^{1,2}$, 
Atsuki Ochi$^{1,2}$, 
%
Yoichi Tomioka$^{3}$, 
Shin-Ichiro Sakai$^{4}$
%
\thanks{$^{1}$Graduate Student of Mechanical Engineering, Institute of Science Tokyo, Meguro-ku, Tokyo 152-8550, Japan}%
\thanks{$^{2}$Researcher and Engineer of Satellite R\&D Division, Interstellar Technologies Inc., Koto-ku, Tokyo 135‑0016, Japan}%
\thanks{$^{3}$Senior Associate Professor of Computer Science and Engineering, University of Aizu, Aizuwakamatsu, Fukushima 965-0006, Japan}%
\thanks{$^{4}$Professor of Spacecraft Engineering, Institute of Space and Astronautical Science, Dayton, Kanagawa 252-5210, Japan}%
\thanks{Corr. author: Yuta Takahashi, {\tt\small stateofyuta@gmail.com}}%
}
\begin{document}
\maketitle
\thispagestyle{empty}
\pagestyle{empty}
\begin{abstract}
This study experimentally validates the principle of large-scale satellite swarm control through learning-aided magnetic field interactions generated by satellite-mounted magnetorquers. This actuation presents a promising solution for the long-term formation maintenance of multiple satellites and has primarily been demonstrated in ground-based testbeds for two-satellite position control. However, as the number of satellites increases beyond three, fundamental challenges coupled with the high nonlinearity arise: 1) nonholonomic constraints, 2) underactuation, 3) scalability, and 4) computational cost. Previous studies have shown that time-integrated current control theoretically solves these problems, where the average actuator outputs align with the desired command, and a learning-based technique further enhances their performance. Through multiple experiments, we validate critical aspects of learning-aided time-integrated current control: (1) enhanced controllability of the averaged system dynamics, with a theoretically guaranteed error bound, and (2) decentralized current management. We design two-axis coils and a ground-based experimental setup utilizing an air-bearing platform, enabling a mathematical replication of orbital dynamics. Based on the effectiveness of the learned interaction model, we introduce NODA-MMH (Neural power-Optimal Dipole Allocation for certified learned Model-based Magnetically swarm control Harness) for model-based power-optimal swarm control. This study complements our tutorial paper on magnetically actuated swarms for the long-term formation maintenance problem.
\end{abstract}

\section{Introduction}
Large-scale space structures offer numerous advantages to both the scientific and commercial sectors, and distributed architecture will significantly enhance these initiatives. One example is large space antennas for direct-connectibility; they enable high data rates with small ground terminals and provide resilient communication in the cellular band. However, the size of the launch vehicle imposes physical constraints and presents significant challenges for deploying larger antennas. To address this, previous studies \cite{takahashi2025distance,shim2025feasibility} investigated the distributed space antenna system, where multiple satellites form a virtual space antenna shown in Fig. ~\ref{fig:fractionated_spacecraft}. This payload distribution enhances redundancy and mission flexibility, reducing thermal and structural demands. 
\begin{figure}[tb]
  \centering
\hspace*{-0.5cm} 
\begin{minipage}[b]{0.95\columnwidth}
    \centering
    \subfloat[Target generation for experiment evaluation toward orbit proof.]{
\includegraphics[width=\columnwidth]{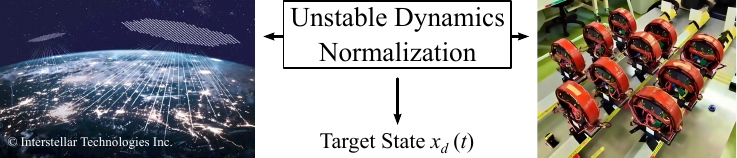}
      \label{fig:distributed_antenna}
    }
  \end{minipage}
 \hspace*{-0.5cm}
  \begin{minipage}[b]{0.95\columnwidth}
    \centering
    \subfloat[Overview of NODA-MMH (Neural power-Optimal Dipole Allocation for certified learned Model-based Magnetically swarm control Harness).]{
      \includegraphics[width=\linewidth]{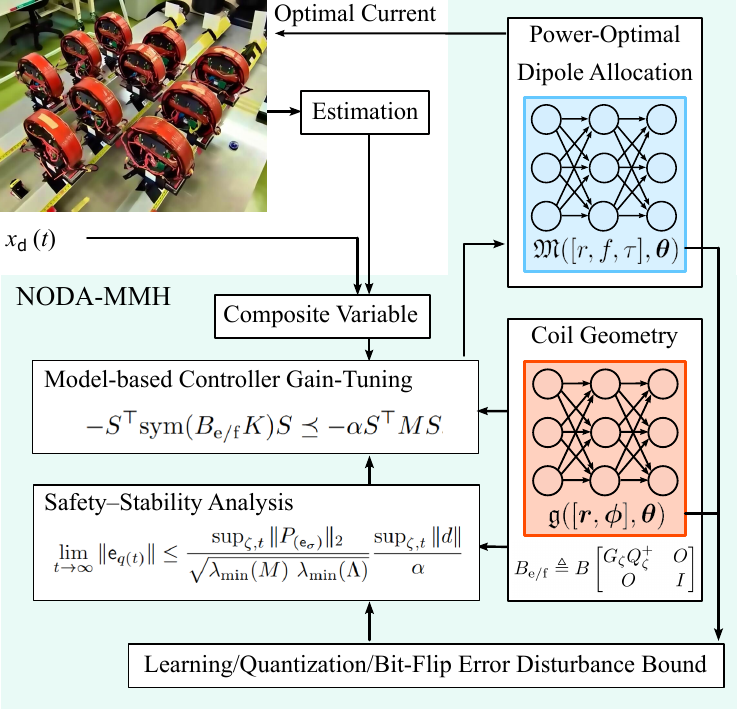}
      \label{fig:uncoordinated}
    }
  \end{minipage}
  \caption{Distributed space antenna concept \cite{takahashi2025distance,shim2025feasibility,takahashi2025anticipatory} and two-axis magnetorquer testbed on a linear air track. 
  The satellite swarm is decentralized to a small group by the time-integrated control, as shall be introduced in subsection~\ref{Time_Integrated_Control_of_Alternating_Current}.
  }
  \label{fig:fractionated_spacecraft}
\end{figure}

Magnetic interaction generated by satellite-mounted magnetorquers (MTQs) presents a promising solution for the long-term maintenance of satellite formation. MTQs are widely employed for attitude control in Earth-orbiting satellites, and Electromagnetic Formation Flight (EMFF) extends this into the relative position control of multiple satellites. The microgravity demonstrations have validated this approach, mainly focusing on relative distance control of two satellites \cite{porter2014demonstration,youngquist2013alternating}. Previous studies have shown the feasibility of a distributed space antenna using MTQs under unstable orbital dynamics of a low-Earth orbit \cite{shim2025feasibility,takahashi2025distance}. 

Despite these advancements, scaling the system to more than three satellites introduces four fundamental challenges. The first one arises from the conservation of angular momentum, which is the nonholonomic constraint. Then, solely smooth state feedback control cannot converge all absolute positions and attitudes of multiple satellites to the desired state \cite{brockett1983asymptotic}. The second problem is underactuation, which stems from a fundamental limitation in controlling the degrees of freedom. Even when each satellite is equipped with a three-axis magnetic coil and applies a direct current (DC), the total number of control inputs is limited to $3N$ and insufficient for $6N$ degrees of freedom \cite{takahashi2022kinematics,schweighart2006electromagnetic}. The third challenge is scalability: each satellite is influenced by unwanted magnetic disturbances from non-cooperative neighbors. The final challenge lies in the high computational cost of the exact magnetic field interaction model, which requires information on position, attitude, and coil geometry \cite{takahashi2025experimental,takahashi2025coil}. 

To overcome these constraints, previous studies proposed actuator-, system-, and controller-level solutions. An actuator-level solution is primarily a time-integrated control mechanism, where the average acceleration of actuator outputs matches the commanded value, a phenomenon commonly observed in insects and birds \cite{taha2020vibrational}. Their systematic approach is the alternating current (AC) method 
\cite{porter2014demonstration,youngquist2013alternating,abbasi2022decentralized}, adjusting both the phase and amplitude \cite{youngquist2013alternating,abbasi2022decentralized}. This decouples the coupling of the multiple satellites and Geomagnetic interaction on averaged dynamics \cite{youngquist2013alternating,abbasi2022decentralized} and realizes 6-DoF control for arbitrary satellite number \cite{takahashi2022kinematics,takahashi2021simultaneous}.
The rigorous interaction includes sinusoidal disturbances, which can be adjusted based on the frequency response analysis. Previous studies also proposed time-scheduled switching magnetic control \cite{ramirez2010new,takahashi2021simultaneous} and dipole polarity switching \cite{ahsun2006dynamics}, which periodically reverses the dipole direction to decouple from the Geomagnetic field. A system-level one uses additional attitude actuators, such as three-axis reaction wheels, with minimized electromagnetic torque \cite{schweighart2006electromagnetic}. Another path is controller-level one: a previous study \cite{takahashi2020time} shows EMFF inherently 
enables asymptotic state steering through non-smooth state feedback control. 
The purpose of reducing mass and maximizing mission efficiency and the accessibility of the Geomagnetic field push actuator-level solution makes it more accessible and attractive. 

Therefore, this study experimentally validates the principles of controlling large-scale satellite swarms using magnetic field interactions. We limit alternating magnetic field control for quantitative validation. Our contributions include designing the unified dipole-allocation framework, NODA-MMH, and the testbed to investigate key aspects of magnetically-acutuated swarm: (1) controllability extension, (2) decentralized current management, and (3) the effectiveness of a learning-based magnetic field interaction model, with a theoretically guaranteed error bound under control-induced disturbance and learning \& implementation errors.
\section{Preliminaries}
We begin by defining our mathematical notation. We write $\mathrm{range}(A)=\{Ax\mid x\in\mathbb{R}^n\}$, $S\in\mathrm{null}(A)$ for the null space of a matrix $A$, i.e., $AS=0$, and $x_N\triangleq[x_1;x_2;\ldots;x_n]$. We denote the Moore–Penrose pseudoinverse of a full row rank matrix $A\in\mathbb{R}^{m\times n_{\geq m}}$ by $A^+= A^\top(AA^\top)^{-1}$, $\mathrm{sym}(A)$ is symmetric part of $A$, i.e., $(A+A^\top)/2$. 
\subsection{Attitude Representation: Modified Rodrigues Parameters}
\label{MRP_definition}
Let $\{{a}\}^\top=\left\{{a}_x, {a}_y, {a}_z\right\}$ denote the basis vectors of a frame $\mathcal{A}$ as $\bm{p}=\{{a}\}^\top p^a=\{{b}\}^\top C^{B/A}p^a$ where $C^{B/A}\in\mathbb{R}^{3\times 3}$ is a coordinate transformation matrix from frame $\mathcal{A}$ to $\mathcal{B}$. The modified Rodrigues parameters $\sigma\in\mathbb{R}^3$ represent the attitude states \cite{wen1991attitude}. 
The kinematic equation of relative attitudes   (${\mathsf{e}}_{\sigma},\mathsf{e}_{\omega}^{b}$) is $\dot{\mathsf{e}}_{\sigma}=Z_{({\mathsf{e}}_{\sigma})} \mathsf{e}_{\omega}^{b}$ \cite{wen1991attitude} where $\mathsf{e}_{\omega}^{b}=\omega^{b}-\omega_{d}^{b}$ 
(see \cite{wen1991attitude,takahashi2022kinematics} about the definition of $Z_{\left(\sigma\right)}$, $C^{B/I}_{(\sigma)}$, and ${\mathsf{e}}_{\sigma}$).
\subsection{Multi-Layer Perceptron and Implementation Error}
\label{MLP_introduction}
The MLP model represents the functional mapping from inputs $x$ into outputs $\mathcal{F}_{(\bm{x}, \bm{\theta})}$ such as a ($L$+1)-layer neural network $\mathcal{F}_{(x, \bm{\theta})}= W^{L+1}\phi(\cdots\phi(W^1 x+b^1)\cdots)+b^{L+1}$ 
where the activation function $\phi(\cdot)$ and $\theta$ include the weights $\theta_w = W^1,\ldots, W^{L+1}$, the bias $\theta_b = b^1,\ldots, b^{L+1}$, and trained to minimize the user-defined loss-function. The Lipschitz constant of each layer is $\|W^l\mathbf{x}+b^l\|_{\text {Lip }}=\sup _{\mathbf{x}} \sigma(\nabla (W^l\mathbf{x}+b^l))=\sigma(W^l)$  where $\|f\|_{{\mathrm{Lip}}}$ is defined for a general differentiable function $f$ and $\forall {x}_1,{x}_2:
{\left\|f_{({x}_1)}-f_{\left({x}_2\right)}\right\|_2}/{\left\|{x}_1-{x}_2\right\|_2} \leq\|f\|_{\mathrm{Lip}}= \sup _{{x}} \sigma(\nabla f_{({x})})$. $\|\mathcal{F}\|_{\text {Lip}}$ is bounded as  
$\|\mathcal{F}\|_{\text {Lip}}\leq \overline{\mathcal{F}_{\text {Lip}}}\triangleq \|\phi\|_{\text {Lip}}^L\prod_{l=1}^{L+1} \sigma\left(W^l\right)$ 
\cite{miyato2018spectral} since $\left\|g_1 \circ g_2\right\|_{\text {Lip }} \leq\left\|g_1\right\|_{\text {Lip }} \cdot\left\|g_2\right\|_{\text {Lip }}$. 
Residual quantization encodes with low-bit integers and 
mitigates power, memory access, and errors, e.g., 4-bit quantization achieves FP16-level accuracy \cite{alvi2025deadline}. Soft errors from space radiation, such as bit flips, degrade $\|\mathcal{F}\|_{\text {Lip }}$. 
\subsection{Electromagnetic Interaction Model}
\label{EMFF}
This subsection presents the magnetic interaction models between circular air- or iron-core coils. We define $j$-th dipole moment of the multilayer coil $\bm{\mu}_{j}\triangleq\{\bm{b}\}^\top [\mu^b_{j_x};\mu^b_{j_y};\mu^b_{j_z}]$ where $\mu^b_{j_{v}}\triangleq N_{t}Ac_{j_{v}}(1+({\mu_r-1})/({1-N_d+\mu_r N_d}))$, 
$N_t$ is the number of coil turns, $A$ is the area enclosed by the coil, and $c_{j_{v}}$ is the current value. 
The Biot-Savart law \cite{schweighart2006electromagnetic} define 
the ``coil geometry vector'' \cite{takahashi2025coil}:
\begin{equation}
\label{circulant_integration_term}
{\bm{g}}_{j_{v}\leftarrow k_w}=
\begin{bmatrix}
\oint
\left[
\oint
\frac{{{r}_{j_{v}\leftarrow k_w}} \times {\mathrm{d}l}_{k_{w}}}{\|r_{j_{v}\leftarrow k_w}\|^3}\right] \times {\mathrm{d}l}_{j_{v}}\\
\oint
R_{j_{v}} \times\left[
\oint
\frac{{{r}_{j_{v}\leftarrow k_w}}\times {\mathrm{d}l}_{k_{w}}}{\|r_{j_{v}\leftarrow k_w}\|^3}\right] \times {\mathrm{d}l}_{j_{v}}
\end{bmatrix}
\in\mathbb{R}^{6}
\end{equation}
where $r_{j_{v}\leftarrow k_w}\in\mathbb{R}^3$ is the relative vector from the coil element ${\mathrm{d}l}_{k_w}$ to ${\mathrm{d}l}_{j_w}$. This introduces the $j$-th electromagnetic force $\bm{f}_{j\leftarrow k}$ and torque $\bm{\tau}_{j\leftarrow k}$  
and their total values on the $j$-satellites due to the neighbor satellites $k\in\mathcal{N}_j$ are \cite{takahashi2025coil}
\begin{equation}
\label{near_field_electromagnetic_interaction_model} 
\bm{u}_{j}=\begin{bmatrix}
    \bm{f}_{j}\\
    \bm{\tau}_{j}\end{bmatrix}
    =\sum_{k\in\mathcal{N}_j}
\bm{u}_{j\leftarrow k}
=\sum_{k\in\mathcal{N}_j}
\frac{\mu_0}{4\pi}G_{j\leftarrow k} \left(\mu^b_k\otimes \mu^b_j\right)
\end{equation}
where $\mu_0=4\pi e^{-7}$ T$\cdot$m/A is permeability constant, ${{G}}_{j\leftarrow k}=A^{-2}[[{\bm{g}}_{j\leftarrow k_x}],[{\bm{g}}_{j\leftarrow k_y}],[{\bm{g}}_{j\leftarrow k_z}]]\in\mathbb{R}^{6\times 9}$, $[{\bm{g}}_{j\leftarrow k_w}]$$=[{\bm{g}}_{j_{x}\leftarrow k_w\mathrm{off}},{\bm{g}}_{j_{y}\leftarrow k_w\mathrm{off}},{\bm{g}}_{j_{z}\leftarrow k_w\mathrm{off}}]\in\mathbb{R}^{6\times 3}$, and ${\bm{g}}_{j_{v}\leftarrow k_w\mathrm{off}}=[I_3,O_3;\left[r_{j_{v}\mathrm{off}}\right]_\times,I_3]{\bm{g}}_{j\leftarrow k}\in\mathbb{R}^{6}$. The far-field model \cite{ahsun2006dynamics,schweighart2006electromagnetic} is $\bm{u}_{j\leftarrow k}\approx[\nabla({\mu}_j \cdot {B}_k);\bm{\mu}_j \times B_k]$ when the coil radius is smaller than distance, i.e., $|R_{i}|\ll|r_{j_{v}\leftarrow k_w}|$.
\section{Survey: Problems and Proposed Solutions}
This section focuses on the problem formulation of controlling magnetic field interactions. Our analysis primarily focuses on AC control, which applies to the various time-integrated control strategies mentioned in the introduction. 
\subsection{Nonholonomic Constraint: Kinematics Control}
\label{Kinematics_Control}
We first mention the command values of electromagnetic force and torque to satisfy the conservation of angular momentum. 
For $n$ satellites and $m \in [1, n]$ satellites equipped with three-axis RWs, we define the tangent space 
\begin{equation}
    \label{tangent_space_kinematics_control}
S_{(n,m)} \in\text{null}(A_{(n,m)})
\end{equation}
where $A_{(n,m)}\in \mathbb{R}^{3 \times(6 n+3m)}$ is from angular momentum conservation $\sum_{j=1}^n(m_j\bm{r}_j \times \dot{\bm{r}}_j+\bm{I}_j \cdot \bm{\omega}_j)+\sum_{j=1}^m \bm{h}_j=\bm{L}$ 
$$
\Leftrightarrow A_{(n,m)}\zeta \triangleq
\begin{bmatrix}
&m_1 [r_{1}^i]_\times, \cdots, m_n [r_{n}^i]_\times,\\
&C^{I/ B_1} J_1, \cdots, C^{I/ B_n} J_n,\\
&C^{I/B_1}, \cdots, C^{I/ B_{m}}
\end{bmatrix}
\begin{bmatrix}
\dot{r}^{i}\\
\omega^{b}\\
\xi^{b}
\end{bmatrix}
=0.
$$
where 
$\xi_j^{b_j}=h_j^{b_j}- L^{b_j}/m$. 
Then, the lagrangian dynamics of $\zeta\in \mathbb{R}^{6n+3m-3}$ is $M\dot{\zeta}+C\zeta=B u+d-A_{(n,m)}^{\top} \eta$ \cite{takahashi2022kinematics} where constraint vector $\eta\in\mathbb{R}^3$, external inputs $d$, and
\begin{equation}
\label{Redundant_EMFF}
B,B^{-1}=
\begin{bmatrix}
E_{3 n} & O&O \\
O& E_{3 n} & \mp E_{3 n \times 3m}\\
O &O & E_{3 m}
\end{bmatrix},\quad
u=
\begin{bmatrix}
    f_c^{i}\\
    \tau_c^{b}\\
    \dot{h}^{b}
\end{bmatrix}
\end{equation}
Substituting $\zeta=S v$ and $\dot{\zeta}=\dot{S} v+S \dot{v}$ and multiplying by $S^{\top}$ from left yields
the kinematics of EMFF \cite{takahashi2022kinematics}:
\begin{equation}
\label{EMFF_dynamics_null_space}
\begin{aligned}
\dot{q}
=P_{({\sigma_j})}\zeta,\quad\quad \overline{M}_{\left(q_s\right)} \dot{v}+\overline{C}_{\left(q_s, v\right)} v=S^{\top}({B}u+d)
\end{aligned}
\end{equation} 
where $\dot{q}
=[\dot{r}^{i}_N;\dot\sigma_N]$, $\overline{M}=S^{\top}M S$, $\overline{C}=S^{\top}(M\dot{S}+CS)$,  
$$
P_{({\sigma_j})}\triangleq
\begin{bmatrix}
    I&O&O\\
    O&Z_{({\mathsf{e}}_{\sigma_1})}\oplus\cdots\oplus Z_{({\mathsf{e}}_{\sigma_n})}&O
\end{bmatrix}
$$
\begin{theorem}[Kinematics control \cite{takahashi2022kinematics}]
\label{Theorem_Kinematics_Control}
The magnetic interaction can realizes $u_c
\in\mathrm{range}(B^{-1}MS)$, i.e., $AM^{-1}Bu_c=0$ 
and applying $u_c=\overline{B}_r^{-1}(-K \mathsf{e}_v-{\overline{d}}+\overline{M} \dot{v}_d+\overline{C} v_d)$ to Eq.~(\ref{Redundant_EMFF}) achieves $q \rightarrow q_{d}$ as $t \rightarrow \infty$ where $S^\top B\overline{B}_r^{-1}=I$, $\mathsf{e}_v$ is composite variable \cite{slotine1991applied}, e.g., Eq.~(\ref{composite_variable}).
\end{theorem}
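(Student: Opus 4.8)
The plan is to reduce the nonholonomic tracking problem to a standard passivity-based (composite-variable) regulation problem on the null-space-reduced model of Eq.~(\ref{EMFF_dynamics_null_space}), and to certify separately that the commanded input is simultaneously (i) physically realizable by the magnetic actuation and (ii) consistent with angular-momentum conservation. Two structural facts have to be pinned down first. Since the columns of $S$ span $\mathrm{null}(A_{(n,m)})$, left-multiplying $M\dot\zeta+C\zeta=Bu+d-A_{(n,m)}^\top\eta$ by $S^\top$ after $\zeta=Sv$ annihilates $A_{(n,m)}^\top\eta$ — which is exactly why the multiplier never appears in Eq.~(\ref{EMFF_dynamics_null_space}) — so the constraint is met for \emph{any} $u$. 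Second, I would verify that the reduced model inherits the Lagrangian structure: substituting $\overline M=S^\top MS$ and $\overline C=S^\top(M\dot S+CS)$ and using skew-symmetry of $\dot M-2C$, a short computation gives $\dot{\overline M}-2\overline C=S^\top(\dot M-2C)S+(\dot S^\top MS-S^\top M\dot S)$, and each summand is skew-symmetric; together with $\overline M=\overline M^\top\succ0$ on $\mathrm{null}(A_{(n,m)})$, this is all the structure the Lyapunov step needs.

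For realizability I would exhibit the right inverse explicitly: take $\overline B_r^{-1}\triangleq B^{-1}MS\,\overline M^{-1}$, so that $S^\top B\,\overline B_r^{-1}=S^\top MS\,\overline M^{-1}=I$ as required, and by construction $u_c=\overline B_r^{-1}(\,\cdot\,)\in\mathrm{range}(B^{-1}MS)$. Then $A_{(n,m)}M^{-1}Bu_c=A_{(n,m)}S\,\overline M^{-1}(\,\cdot\,)=0$, i.e., the command lies on the feasible (constraint-compatible) manifold; that the Biot–Savart/dipole-allocation map of Sec.~\ref{EMFF} can actually generate such a $u_c$ for the given swarm geometry (full column rank of the relevant $G_{j\leftarrow k}$, or the NODA allocation) is the bridge from the abstract input to the magnetorquers and should be invoked here.

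Substituting $u=u_c$ and $\overline d=S^\top d$ into Eq.~(\ref{EMFF_dynamics_null_space}) collapses the closed loop to $\overline M\dot{\mathsf{e}}_v+\overline C\mathsf{e}_v=-K\mathsf{e}_v$, where $\mathsf{e}_v=v-v_d$ is the composite error variable built from $\mathsf{e}_\sigma,\mathsf{e}_\omega^b$ (with $v_d$ the passivity-based reference encoding $\dot q_d$ and a Hurwitz feedback on $q-q_d$). Then the convergence step is the usual Lyapunov/Barbalat argument: with $V=\tfrac12\mathsf{e}_v^\top\overline M\mathsf{e}_v\ge0$ and the skew-symmetry above, $\dot V=\mathsf{e}_v^\top(\tfrac12\dot{\overline M}-\overline C)\mathsf{e}_v-\mathsf{e}_v^\top K\mathsf{e}_v=-\mathsf{e}_v^\top K\mathsf{e}_v\le0$ for $K\succ0$, so $\mathsf{e}_v\in L_2\cap L_\infty$; uniform continuity of $\dot V$ then yields $\mathsf{e}_v\to0$, and since $\mathsf{e}_v$ is a Hurwitz-filtered configuration error (through $\dot q=P_{(\sigma_j)}Sv$ and the definition of $v_d$), $q\to q_d$.

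I expect the main obstacle to be the interface between these two halves rather than either half in isolation. On the actuation side, one must show that the magnetic interaction genuinely spans the directions needed to synthesize $u_c$ for a concrete geometry, rather than merely assuming everything in $\mathrm{range}(B^{-1}MS)$ is reachable — this is where the coil-geometry/rank conditions of Sec.~\ref{EMFF} do real work and where control-induced sinusoidal disturbances must be absorbed into $\overline d$. On the analysis side, Barbalat requires boundedness and uniform continuity of the closed-loop signals; in particular one must argue that $P_{(\sigma_j)}$, equivalently $Z_{(\mathsf{e}_\sigma)}$, stays nonsingular along the trajectory, so that $\mathsf{e}_v\to0$ really forces $q\to q_d$ and not merely $v\to v_d$.
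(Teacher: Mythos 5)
Your proposal is correct, and it is worth noting up front that the paper does not actually supply a proof of Theorem~\ref{Theorem_Kinematics_Control}: the result is quoted from \cite{takahashi2022kinematics}, so the only in-paper analogue to compare against is the proof of Theorem~\ref{theorem_NODA_MMH} in Appendix~\ref{proof_NODA_MMH}, which establishes the stronger, disturbance-perturbed version of the same closed loop. Your two structural pillars are exactly the ones that proof also rests on --- $S^\top A_{(n,m)}^\top\eta=0$ eliminating the multiplier in Eq.~(\ref{EMFF_dynamics_null_space}), and the skew-symmetry of $\dot{\overline M}-2\overline C$ (which appears there as $\partial_\mu y^\top(\dot{\overline M}-2\overline C)\partial_\mu y=0$). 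Where you diverge is the convergence machinery: you close the loop to $\overline M\dot{\mathsf{e}}_v+\overline C\mathsf{e}_v=-K\mathsf{e}_v$ and run the classical Slotine--Li energy argument $V=\tfrac12\mathsf{e}_v^\top\overline M\mathsf{e}_v$ with Barbalat, yielding asymptotic convergence under the minimal hypothesis $K\succ0$; the paper's framework instead uses partial contraction on a virtual system and the comparison lemma, which costs the extra gain condition of Eq.~(\ref{inequality_contraction}) but buys an exponential rate and the explicit ISS error ball of Eq.~(\ref{steady_error}) needed once the control-induced disturbance $d$ is retained rather than cancelled by $\overline d$. Your explicit right inverse $\overline B_r^{-1}=B^{-1}MS\,\overline M^{-1}$ is a genuine addition --- the theorem statement only posits $S^\top B\overline B_r^{-1}=I$ and the paper never exhibits a witness --- and it simultaneously certifies $u_c\in\mathrm{range}(B^{-1}MS)$ and $AM^{-1}Bu_c=ASM^{-1}\overline M^{-1}(\cdot)\cdot 0$-style annihilation in one line. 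The two caveats you flag (that realizability of $u_c$ by the dipole allocation is a separate rank/feasibility claim resolved by the $6n$-variable versus $6n-6$-constraint count of $\mathcal{OPT}_{\mathrm{AC}}$, and that $Z_{(\mathsf{e}_\sigma)}$ must remain nonsingular so $\mathsf{e}_v\to0$ implies $q\to q_d$) are exactly the points the paper leaves implicit, so raising them is appropriate rather than a gap in your argument.
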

\subsection{Underactuation and Scalability: Time-Integrated Control}
\label{Time_Integrated_Control_of_Alternating_Current}
This subsection introduces a time-integration current control, particularly the AC method, to enhance controllability and address scalability issues. 
We rewrite the model in Eq.~(\ref{near_field_electromagnetic_interaction_model}) as a multivariate bilinear polynomial system \cite{schweighart2006electromagnetic,takahashi2022kinematics}
$$
\begin{aligned}
&\text{s.t.\ }\left\{
\begin{aligned}
&
\underbrace{f_{cj(x,y,z)},\tau_{cj(x,y,z)}}_{\substack{\text{= Command input}}}=\sum_{k\neq j}\left\{f_{j\leftarrow k(x,y,z)},\tau_{j\leftarrow k(x,y,z)}\right\}\\
&=
\sum_{k=1}^3 \sum_{\substack{l=1 \\(l \neq j)}}^n \sum_{m=1}^3 \underbrace{\mu_{j k}}_{\text{Variable}} \underbrace{C_{jklm}}_{\text{Const.}} \underbrace{\mu_{l m}}_{\text{Variable}}\quad j\in[1,N]
\end{aligned}
\right.
\end{aligned}
$$
where $C_{jklm}\in\mathbb{R}$ is a constant associated with system state. This involves $3n$ variables and $6n$-$6$ polynomial equations. Consequently, a DC-based magnetic field can generate arbitrary electromagnetic forces for $n$ satellites, but simultaneous control is only possible for up to two satellites. 
We assume that the MTQs of the $j$th satellite are driven by sinusoidal signals with angular frequency $\omega_j$ [rad/s]: ${\mu}_j(t)=s_{j}\sin \left(\omega_j t\right)+c_{j} \cos \left(\omega_j t\right)$ for $j \in [1, n]$ and $u_{j\leftarrow k(t)}=\sum_{k\neq j}$$\frac{\mu_0}{8\pi}G_{j\leftarrow k}$${\sum_{\small{(\mathrm{a},\mathrm{b})=\{(\mathrm{s},\mathrm{s}),(\mathrm{c},\mathrm{c}),(\mathrm{s},\mathrm{c}),(\mathrm{c},\mathrm{s})\}}}\mathrm{a}_{{\omega_k t}}\mathrm{b}_{{\omega_j t}}\mathrm{a}_k^b\otimes \mathrm{b}_j^b}$.
The agents operating at different frequencies do not interact with each other in the first-order averaged dynamics, i.e.,
$$
\int_0^T \sin(\omega_j\tau)\sin(\omega_k\tau)\frac{\mathrm{d}\tau}{T}=
\left\{
\begin{aligned}
    &0\ &&(\mathrm{if}\ &&\omega_j\neq \omega_k)\\
    &\frac{1}{2}\ &&(\mathrm{else\ if}\ &&\omega_j= \omega_k)
\end{aligned}
\right..
$$
We obtain the averaged interaction $u_{j\leftarrow k}^\mathrm{avg}=
\int_0^T u_{j\leftarrow k(\tau)}\frac{\mathrm{d}\tau}{T}$:
\begin{equation}
\label{averaged_input}
u_{j}^\mathrm{avg}=\sum_{k\in\mathcal{N}_j}u_{j\leftarrow k}^\mathrm{avg}
\approx\frac{1}{2}
\frac{\mu_0}{4\pi}G_{j\leftarrow k} \left(s^b_k\otimes s^b_j+c^b_k\otimes c^b_j\right)
\end{equation}
Then, the AC-based optimal allocation problem $\mathcal{OPT}_{\mathrm{AC}}$ for a given evaluation function $J$ with arbitrary parameter $\chi$ is
$$
\begin{aligned}
&\mathcal{OPT}_{\mathrm{AC}}:\ \text{min.\ }J\left(s_{1},\ldots,s_{n},c_{1},\ldots,c_{n},\chi\right)\\
&\text{s.t.}
\left\{
\begin{aligned}
&\underbrace{f_{cj(x,y,z)},\tau_{cj(x,y,z)}}_{\substack{\text{= Const. command input}}}= \sum_{k\neq j}\left\{f^{\mathrm{avg}}_{j\leftarrow k(x,y,z)},\tau^{\mathrm{avg}}_{j\leftarrow k(x,y,z)}\right\}\\
&=\sum_{k=1}^3 \sum_{\substack{l=1 \\
(l \neq j)}} \sum_{m=1}^3\left(\underbrace{s_{j k}}_{\mathrm{var.}} \underbrace{C^{\mathrm{sin}}_{jklm}}_{\mathrm{const.}} \underbrace{s_{lm}}_{\mathrm{var.}}+ \underbrace{c_{jk}}_{\mathrm{var.}} \underbrace{C^{\mathrm{cos}}_{jklm}}_{\mathrm{const.}} \underbrace{c_{lm}}_{\mathrm{var.}}\right)
\end{aligned}
\right.
\end{aligned}
$$
for $j\in[1,N]$ where $C^{\mathrm{sin},\mathrm{cos}}_{jklm}\in\mathbb{R}$ are constants of the system state. Since this includes $6n$ variables and $6n$-6 polynomials,
the equality constraints in $\mathcal{OPT}_{\mathrm{AC}}$ potentially yield \( 2^{6n-6} \) solutions and six free variables, and simultaneous control can be achieved for $n$ satellites \cite{takahashi2022kinematics}. Notably, the lower bound of $\mathcal{OPT}_{\mathrm{AC}}$ is gained by its Lagrange dual problem \cite{takahashi2024neural} with $P_{(k_{\lambda},\hat{\lambda}_N)}$ associated with $C^{\mathrm{sin},\mathrm{cos}}_{jklm}$ \cite{takahashi2024neural}:
\begin{equation}
\label{dual_problem}
\mathcal{OPT}^{\mathrm{Dual}}_{\mathrm{AC}}
\max_{\hat{\lambda}_N^* \in \mathbb{R}^{6(n-1)}} : -\frac{\hat{\lambda}_N^{\top} {u}^a}{\mu_0/8\pi}\ \ \text{s.t.:}\ \ P_{(k_{\lambda},\hat{\lambda}_N)} \succeq O.
\end{equation}
\subsection{Computational Cost: Learning-Aided Dipole Allocation}
The calculations of $\mathcal{OPT}_{\mathrm{AC}}$ and ${\bm{g}}_{j_{v}\leftarrow k_w}$ in Eq.~(\ref{circulant_integration_term}) take computational costs and previous studies successfully approximate these two types of MLP: power-optimal control allocation for arbitrary satellite numbers \cite{takahashi2024neural}
\begin{equation}
\label{NODA_model_expression}
s_{lm},\ c_{lm}=\mathfrak{m}(\hat{r}^{l},\hat{f}^{l},\hat{\tau}^{l},\bm{\theta})
\quad\mathrm{s.t.:}\quad\mathcal{OPT}_{\mathrm{AC}}
\end{equation}
and a learning-based exact magnetic field model \cite{takahashi2025experimental,takahashi2025coil}
\begin{equation}
\label{coil_geometry_model_expression}
{g}_{j_{v}\leftarrow k_w}
=\mathfrak{g}{\left(\bm{r}^{(i)}_{[2]}/\gamma,\bm{\phi}^{(i)}_{[2]}, \bm{\theta}\right)}
\begin{bmatrix}
    I&O\\
    O&\gamma I
\end{bmatrix}
\end{equation}
\section{Main Result: power-optimal nonlinear control for 
magnetically actuated swarm}
\label{nonlinear_robust_controller}
This section proposes a learning-based dipole allocation strategy certified by a safety-stability analysis. Based on the preceding discussion, the magnetically actuated swarm system can be formulated as a multi-agent system subject to nonholonomic constraints and control-induced disturbance. For this system, this study takes a model-based controller design approach enabled by the learned magnetic field model. Our NODA-MMH provides the desired coil currents to minimize tracking error while achieving optimal power consumption utilizing the learned model of  $\mathcal{OPT}_{\mathrm{AC}}$ in subsection~\ref{Time_Integrated_Control_of_Alternating_Current} for a given neighbor satellite information.
\subsection{Control-Induced Disturbance Attenuation}
We first prove that power-optimal dipole allocation also leads to the attenuation of time-integrated control-induced disturbances. For quantitative analysis, we employ the AC magnetic control of $\omega$ and the rigorous magnetic interaction includes sinusoidal disturbances ${u}_{j\leftarrow k(t)}=u_{j\leftarrow k}^{\mathrm{avg}}+{d}_{j\leftarrow k(t)}^{2\omega}$ where $u_{j}^{\mathrm{avg}}$ in Eq.~(\ref{averaged_input}). We analytically derive ${d}_{j(t)}^{2\omega}=\{\mathrm{c}_{{2\omega t}}x+\mathrm{s}_{{2\omega t}}y\}$ where $[x;y]=\sum_{k\in\mathcal{N}_j}\frac{\mu_0}{8\pi}(I_2\otimes G_{j\leftarrow k})z$ and $z=[c_k^b\otimes c_j^b-s_k^b\otimes s_j^b;c_k^b\otimes s_j^b+s_k^b\otimes c_j^b]$.
\begin{lemma}
\label{quadratic_upper_bound}
Power $w$ linearly bounds $\sup_{t\in[0,T)}\|{d}_{j(t)}^{2\omega}\|$.
\end{lemma}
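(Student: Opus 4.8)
The plan is to replace the time-dependent supremum by a single static vector norm, expand that norm in the sinusoid amplitudes $s_l^b,c_l^b$, and recognize the result as a constant multiple of the coils' electrical power.

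First I would use the harmonic structure $d_{j(t)}^{2\omega}=\mathrm{c}_{2\omega t}x+\mathrm{s}_{2\omega t}y$. Because $\mathrm{c}_{2\omega t}^2+\mathrm{s}_{2\omega t}^2=1$, Cauchy--Schwarz gives $\|\mathrm{c}_{2\omega t}x+\mathrm{s}_{2\omega t}y\|\le|\mathrm{c}_{2\omega t}|\,\|x\|+|\mathrm{s}_{2\omega t}|\,\|y\|\le\sqrt{\|x\|^2+\|y\|^2}=\|[x;y]\|$ for every $t$, hence $\sup_{t\in[0,T)}\|d_{j(t)}^{2\omega}\|\le\|[x;y]\|$, a bound free of $t$.

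Next I would propagate this through $[x;y]=\sum_{k\in\mathcal N_j}\tfrac{\mu_0}{8\pi}(I_2\otimes G_{j\leftarrow k})z_k$ with $z_k=[\,c_k^b\otimes c_j^b-s_k^b\otimes s_j^b\,;\,c_k^b\otimes s_j^b+s_k^b\otimes c_j^b\,]$. The triangle inequality, sub-multiplicativity of the spectral norm, and $\sigma(I_2\otimes G_{j\leftarrow k})=\sigma(G_{j\leftarrow k})$ yield $\|[x;y]\|\le\tfrac{\mu_0}{8\pi}\sum_{k\in\mathcal N_j}\sigma(G_{j\leftarrow k})\,\|z_k\|$. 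The decisive computation is $\|z_k\|$: using $\|a\otimes b\|=\|a\|\,\|b\|$ and $(a\otimes b)^\top(c\otimes d)=(a^\top c)(b^\top d)$, the two mixed inner products appearing in the two blocks are $-2(c_k^{b\top}s_k^b)(c_j^{b\top}s_j^b)$ and $+2(c_k^{b\top}s_k^b)(c_j^{b\top}s_j^b)$ and therefore cancel, leaving the clean identity $\|z_k\|^2=(\|s_k^b\|^2+\|c_k^b\|^2)(\|s_j^b\|^2+\|c_j^b\|^2)$.

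Finally I would identify $\|s_l^b\|^2+\|c_l^b\|^2$ with the cycle-averaged electrical power $w_l$ of satellite $l$'s coils up to a fixed constant (coil resistance, turns, area), so that $\|z_k\|\propto\sqrt{w_kw_j}$. AM--GM, $\sqrt{w_kw_j}\le\tfrac12(w_k+w_j)$, together with bounding each $w_l$ by the aggregate power $w$, then gives $\sup_{t\in[0,T)}\|d_{j(t)}^{2\omega}\|\le\alpha\,w$ with $\alpha$ depending only on $\mu_0$, $|\mathcal N_j|$, $\max_{k\in\mathcal N_j}\sigma(G_{j\leftarrow k})$, and the coil constants, which is the asserted linear (indeed homogeneous) bound. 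The main obstacle I anticipate is precisely the $\|z_k\|$ step---seeing that the cross terms cancel so that the bound factors into per-coil power-like quantities instead of a messier bilinear form---while a minor subtlety is committing to one consistent notion of ``power'' (RMS versus peak, per-satellite versus swarm) so that $\alpha$ is well defined.
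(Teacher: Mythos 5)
Your argument is correct, and its skeleton matches the paper's: both reduce $\sup_{t}\|d_{j(t)}^{2\omega}\|$ to the static quantity $\|[x;y]\|$ and then exploit the Kronecker (bilinear-in-amplitudes) structure of $z$ to turn that into a quadratic form in the dipole amplitudes, i.e.\ something proportional to power. The differences are in how each half is executed. For the first half, the paper does not merely bound the supremum: writing $\|d_{j}^{2\omega}\|^2=v_{(t)}^\top[\|x\|^2,x^\top y;x^\top y,\|y\|^2]v_{(t)}$ with $\|v\|=1$, it records the supremum \emph{exactly} as $\sqrt{\lambda_{\max}}$ of that $2\times 2$ Gram matrix (Eq.~(\ref{result})), and only afterwards relaxes $\lambda_{\max}\le\mathrm{Trace}=\|[x;y]\|^2$; your Cauchy--Schwarz step lands directly on the trace bound, which suffices for the lemma but forgoes the tight expression that the paper later reuses as the plotted error bound in the experiments. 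For the second half, the paper argues abstractly: it posits $W_0$ and $W\succ0$ with $z=W_0(m\otimes m)$ and $W_0^\top(I\otimes G_{j\leftarrow k})^\top(I\otimes G_{j\leftarrow k})W_0\preceq W\otimes W$, then invokes $(m\otimes m)^\top(W\otimes W)(m\otimes m)=(m^\top Wm)^2$ to get $\|[x;y]\|\le\frac{\mu_0}{8\pi}m^\top Wm$. Your route replaces this asserted majorization by the explicit identity $\|z_k\|^2=(\|s_k^b\|^2+\|c_k^b\|^2)(\|s_j^b\|^2+\|c_j^b\|^2)$ (the cross terms do cancel, as you computed) followed by the operator-norm bound $\sigma(G_{j\leftarrow k})$ and AM--GM; this is more elementary and self-contained, makes the per-satellite factorization of the bound visible, and avoids having to exhibit $W$, at the price of a possibly looser constant ($\sigma(G_{j\leftarrow k})$ times the product of total per-coil energies, rather than an optimized $W$). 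Your closing caveat about fixing one convention for ``power'' is apt --- the paper leaves $w$ and $m$ similarly implicit --- and note that in either treatment the multiplicative constant depends on the geometry through $G_{j\leftarrow k}$ (or $W$), so ``linearly bounds'' should be read as linear in $w$ for a fixed configuration.
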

\begin{proof}
Since $\|{d}_{j}^{2\omega}\|^2
=v_{(t)}^\top
[\|x\|^2,x^\top y;x^\top y,\|y\|^2]
v_{(t)}$ with $v_{(t)}\triangleq 
[\mathrm{s}_{{2\omega t}};\mathrm{c}_{{2\omega t}}]$ yieds $\sup_{t\in[0,T)}\|{d}_{j\leftarrow k}^{2\omega}\|$
\begin{equation}
\label{result}
=\sqrt{{\left\|
[x;y]\right\|^2/2+\sqrt{(\|x\|^2-\|y\|^2)^2+4(x^\top y)^2}}/{2}}
\end{equation}
where we use $\|v\|=1$.  
We define $W_0$ and $W=W^\top\succ 0$ such that $z=W_0(m\otimes m)$ and $0 \preceq W_0^\top (I\otimes G_{j\leftarrow k})^\top (I\otimes G_{j\leftarrow k})W_0 \preceq  (W\otimes W)$. Then, we obtain $\sup_{t\in[0,T)}\|{d}_{j\leftarrow k}^{2\omega}\|^2\leq\|[x;y]\|^2\leq
(\frac{\mu_0}{8\pi}m^\top W m)^2$
where we apply $\lambda_{\max(\cdot)}\leq \mathrm{Trace}_{(\cdot)}$ and the Kronecker identity.
\end{proof}
\noindent
Minimizing $w$ for agents operating $\omega_j$ also leads to reducing the control-induced disturbance from an uncooperative neighbor using $\omega_k$, although we omit a proof for simplicity.
\subsection{Attitude-Term Decoupled Dipole Allocation}
For proximity operation, the power-optimal dipole allocation problem $\mathcal{OPT}_{\mathrm{AC}}$ in subsection~\ref{Time_Integrated_Control_of_Alternating_Current} naturally includes the nonlinear coil geometry information. Then, this subsection decouples its attitude-induced terms using the coil geometry learned model $\mathfrak{g}$ to reduce the sample region. 
Let $X_{\zeta}\in\mathbb{R}^{6(n-1)\times 9n^2}$ be stack matrix for arbitrary matrix $X_{j\leftarrow k}\in\mathbb{R}^{6\times 9}$ and its formulation for three satellite is
$$
X_{\zeta}\triangleq
K
\begin{bmatrix}
    X_{1\leftarrow 2}&O&X_{1\leftarrow 3}K_{3,3}\\
    X_{2\leftarrow 1} K_{3,3}&X_{2\leftarrow 3}&O 
\end{bmatrix}
R
$$
where $K\in \mathbb{R}^{6(n-1)\times 6(n-1)}$ is a block permutation matrix that reorders $[f_1,\tau_1,f_2,\tau_2,\ldots]$ into $[f_1,f_2,\ldots,\tau_1,\tau_2,\ldots]$ and $K_{3,3}$ is the constant matrix such that $K_{3,3}(a \otimes b) = (b \otimes a)$, $R=[E_2 \otimes E_1;E_3 \otimes E_2;\ldots;E_1 \otimes E_n]\in\mathbb{R}^{(9n(n-1)/2)\times 9n^2}$, $E_i=(e_i^\top \otimes I_3)\in\mathbb{R}^{3\times 3n}$ and $e_i$ is the $n$ dimentional basis. This express the exact/far-model magnetic interactions using $[\mu]=\left(s_N \otimes s_N + c_N \otimes c_N\right)\in\mathbb{R}^{9n^2}$:
\begin{equation}
\label{exact_input_model}
u_{\mathrm{exa}}\triangleq
\begin{bmatrix}
    f_{\mathrm{exa}}\\
    \tau_{\mathrm{exa}}
\end{bmatrix}=\frac{1}{2}\frac{\mu_0}{4\pi}
G_\zeta[\mu],\quad  
u_{\mathrm{far}}
\triangleq\frac{1}{2}\frac{\mu_0}{4\pi}
Q_{\zeta}[\mu]
\end{equation}
\begin{lemma}[Exact/far-magnetic model mapping]
Consider $[\mu]\in\mathrm{range}(P_\zeta^\top)$ using arbitrary $P_\zeta\in\mathbb{R}^{6n\times 9n}$ such that $\mathrm{rank}(P_\zeta)= 6n$. Then, we obtain $u_{\mathrm{exa}}=H_\zeta u_{\mathrm{far}}$ where $H_\zeta \triangleq \{H_\zeta \in \mathbb{R}^{6n-3\times 6n} \mid G_\zeta P_\zeta^\top = H_\zeta Q_\zeta P_\zeta^\top 
\}$.
\end{lemma}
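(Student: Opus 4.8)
The statement packages two claims: (i) the set defining $H_\zeta$ is nonempty, i.e.\ the linear matrix equation $G_\zeta P_\zeta^\top = X\,(Q_\zeta P_\zeta^\top)$ in the unknown $X$ is solvable; and (ii) for \emph{any} such $H_\zeta$ the identity $u_{\mathrm{exa}}=H_\zeta u_{\mathrm{far}}$ holds. Claim (ii) is the short step: since $[\mu]\in\mathrm{range}(P_\zeta^\top)$, I would write $[\mu]=P_\zeta^\top w$ for some coefficient vector $w$, which is moreover \emph{unique} because $\mathrm{rank}(P_\zeta)=6n$ makes $P_\zeta^\top$ injective. Substituting into Eq.~(\ref{exact_input_model}) gives $u_{\mathrm{exa}}=\tfrac12\tfrac{\mu_0}{4\pi}G_\zeta P_\zeta^\top w$ and $u_{\mathrm{far}}=\tfrac12\tfrac{\mu_0}{4\pi}Q_\zeta P_\zeta^\top w$; then the defining relation $G_\zeta P_\zeta^\top=H_\zeta Q_\zeta P_\zeta^\top$ yields $u_{\mathrm{exa}}=H_\zeta\bigl(\tfrac12\tfrac{\mu_0}{4\pi}Q_\zeta P_\zeta^\top w\bigr)=H_\zeta u_{\mathrm{far}}$, independently of which solution $H_\zeta$ is taken.

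The substance is claim (i). By the standard solvability criterion for $XB=C$, the equation $G_\zeta P_\zeta^\top = X\,(Q_\zeta P_\zeta^\top)$ has a solution if and only if $\mathrm{null}(Q_\zeta P_\zeta^\top)\subseteq\mathrm{null}(G_\zeta P_\zeta^\top)$, equivalently $\mathrm{rank}\!\begin{bmatrix}(Q_\zeta P_\zeta^\top)^\top & (G_\zeta P_\zeta^\top)^\top\end{bmatrix}=\mathrm{rank}\,(Q_\zeta P_\zeta^\top)^\top$; in that case $H_\zeta=(G_\zeta P_\zeta^\top)(Q_\zeta P_\zeta^\top)^{+}$ is one explicit element of the set (a genuine inverse when $Q_\zeta P_\zeta^\top$ is square of full rank). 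The cleanest sufficient condition is that $Q_\zeta P_\zeta^\top$ has full column rank $6n$: then $\mathrm{null}(Q_\zeta P_\zeta^\top)=\{0\}$ and the inclusion is automatic. I would obtain this from $\mathrm{rank}(P_\zeta)=6n$ (so $P_\zeta^\top$ is injective) together with injectivity of the far-field operator $Q_\zeta$ restricted to $\mathrm{range}(P_\zeta^\top)$ — which is exactly the feasibility subspace carved out by the command equations of $\mathcal{OPT}_{\mathrm{AC}}$ — using the explicit block structure ($K$, $K_{3,3}$, $R$, $E_i$) that defines $X_\zeta$ for stack matrices, specialized to $X=G$ and $X=Q$.

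\textbf{Main obstacle.} The hard part will be justifying the kernel inclusion (or full-column-rank of $Q_\zeta P_\zeta^\top$) rigorously rather than by appeal to genericity: it rests on how $P_\zeta$ is chosen relative to the duplication/permutation structure shared by $G_\zeta$ and $Q_\zeta$, and one must rule out accidental rank drops for the particular relative positions and attitudes at hand. I would also want to reconcile the dimension bookkeeping — the $6(n-1)$ rows appearing in the stack-matrix construction versus the $6n-3$ rows claimed for $H_\zeta$, and the role (if any) of the reaction-wheel columns — either by restricting attention to configurations in which $Q_\zeta P_\zeta^\top$ is square and invertible, so that $H_\zeta=(G_\zeta P_\zeta^\top)(Q_\zeta P_\zeta^\top)^{-1}$ is the unique solution, or by fixing the pseudoinverse-based selection $H_\zeta=(G_\zeta P_\zeta^\top)(Q_\zeta P_\zeta^\top)^{+}$ and verifying the consistency identity $G_\zeta P_\zeta^\top(Q_\zeta P_\zeta^\top)^{+}(Q_\zeta P_\zeta^\top)=G_\zeta P_\zeta^\top$ directly.
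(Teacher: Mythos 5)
Your claim (ii) — write $[\mu]=P_\zeta^\top w$, substitute into Eq.~(\ref{exact_input_model}), and invoke the defining relation $G_\zeta P_\zeta^\top=H_\zeta Q_\zeta P_\zeta^\top$ — is exactly the paper's entire proof, which reads in full ``Substitution $[\mu]$ and $G_\zeta P_\zeta^\top$ into $G_\zeta[\mu]$ concludes.'' So on the part the paper actually proves, you match it. Where you go further is claim (i), the nonemptiness of the set defining $H_\zeta$: the paper is silent on this, and you correctly identify that solvability of $X\,(Q_\zeta P_\zeta^\top)=G_\zeta P_\zeta^\top$ requires $\mathrm{null}(Q_\zeta P_\zeta^\top)\subseteq\mathrm{null}(G_\zeta P_\zeta^\top)$, with $H_\zeta=(G_\zeta P_\zeta^\top)(Q_\zeta P_\zeta^\top)^{+}$ as an explicit witness when it holds. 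This is a genuine additional hypothesis that the lemma as stated does not supply and the paper does not verify; note that the surrounding text only later imposes the related condition $\mathrm{null}(G_\zeta^\top)\subseteq\mathrm{null}(Q_\zeta^\top)$ when defining the projector $\Pi_{Q_\zeta}$, which is about the transposes and does not by itself close your gap. Your observation about the dimension bookkeeping ($6(n-1)$ rows for the stacked $G_\zeta,Q_\zeta$ versus the $6n-3\times 6n$ shape asserted for $H_\zeta$, and $P_\zeta\in\mathbb{R}^{6n\times 9n}$ acting on $[\mu]\in\mathbb{R}^{9n^2}$) also points at real inconsistencies in the statement rather than at a flaw in your argument. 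In short: your proof is correct and strictly more careful than the paper's; the only thing you cannot discharge from the given hypotheses — the kernel inclusion guaranteeing $H_\zeta$ exists — is something the paper does not discharge either.
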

\begin{proof}
Substituion $[\mu]$ and $G_\zeta P_\zeta^\top$ into $G_\zeta[\mu]$ concludes. 
\end{proof}
\noindent
We define $s$ as the dimension number of the kernel space of $P_\zeta$. If the $2s^2$ is smaller than $6(n-1)$, there exists $[\mu]\triangleq P_\zeta^\top \eta$ that satisfies Karush-Kuhn-Tucker condition conditions \cite{takahashi2024neural}. 
\noindent
We define the orthogonal projector $\Pi_{Q_\zeta}\triangleq Q_\zeta^+ Q_\zeta$ 
such that $(I-\Pi_{Q_\zeta})G_\zeta^{\top}=O$ if $\mathrm{null}(G_\zeta^\top)\subseteq \mathrm{null}(Q_\zeta^\top)$. A trivial subset of the common null space of $Q_\zeta^\top$ and $G_\zeta^\top$ is given by 
Eq.~(\ref{tangent_space_kinematics_control}) and 
we can newly define for $\mathrm{null}(G_\zeta^\top)\supseteq \mathrm{null}(Q_\zeta^\top)$:
\begin{equation}
\label{new_tangent_space}
S_{(n,m)}\in
\mathrm{null}(A_{(n,m)})\cap\mathrm{null}\big((I-\Pi_{Q_\zeta})G_\zeta^\top B^\top\big),
\end{equation}
along with the definition in Eq.~(\ref{tangent_space_kinematics_control}). This decouples the 
\begin{equation}
\label{decoupling_attitude_matrix}
S^\top B \begin{bmatrix}
    u_{\mathrm{exa}}\\
    \dot{h}
\end{bmatrix}
\triangleq S^\top B_{\mathrm{e/f}}
\begin{bmatrix}
    u_{\mathrm{far}}\\
    \dot{h}
\end{bmatrix}
,\ B_{\mathrm{e/f}}\triangleq B\begin{bmatrix}G_\zeta Q_\zeta^+&O\\O&I\end{bmatrix}
\end{equation}
\subsection{Model-based Optimal Nonlinear Controller Design}
Finally, we present a convex optimization-based nonlinear controller for optimal submanifold stabilization with safety-stability analysis. In contrast to the asymptotic stability in Theorem~\ref{Theorem_Kinematics_Control}, we design the dipole moment allocation for exponential stability that can be directly derived by learning-based dipole allocation $\mathfrak{m}$ in Eq.~(\ref{NODA_model_expression}) with  $\mathfrak{g}$ in Eq.~(\ref{coil_geometry_model_expression}). 
\begin{theorem}[Input-to-State Stability Analysis]
\label{theorem_NODA_MMH}
For given $\zeta_d\triangleq[\dot{r}_d^{i};C^{B_j / B_{j d}} \omega_{j d}^{b_{j_d}};0_{3m}]$, let $\mathsf{e}_v$ be defined with $\Lambda\succ O$:
\begin{equation}
\label{composite_variable}
\mathsf{e}_v=v-v_r,\quad v_r \triangleq (P_{(\mathsf{e}_{\sigma})}S)^+\left(P_{(\mathsf{e}_{\sigma})}\zeta_d -\Lambda\mathsf{e}_q\right)
\end{equation}
where $P_{(\cdot)}$ in Eq.~(\ref{EMFF_dynamics_null_space}).
Consider the nonlinear controller:
\begin{equation}
\label{designed_input}
\begin{bmatrix}
u_{\mathrm{far}}\\
    \dot{h}
\end{bmatrix}
=u_r -KS\mathsf{e}_v,\ u_r=B_{\mathrm{e/f}}^{-1}\left(M\dot{\zeta}_r+C\zeta_r -\hat{d}\right)\\
\end{equation}
where $\hat{{d}}$ is the estimation values of $d$, $S_{(n,m)}$ is in Eq.~(\ref{new_tangent_space}), $\zeta_r=Sv_r$, and the gain matrix $K$ satisfies
\begin{equation}
\label{inequality_contraction}
-S^\top \mathrm{sym}(B_{\mathrm{e/f}}K) S\preceq -\alpha S^\top M S.
\end{equation}
Then, the state error ${\mathsf{e}}_q$ in the closed-loop system applied the dipole moment $\{[\mu]|\frac{\mu_0}{8\pi}Q_\zeta[\mu]=u_{\mathrm{far}}\mathrm{\ in\ Eq.~(\ref{designed_input})}\}$ to Eq.~(\ref{Redundant_EMFF}) is globally exponentially converged into the error ball: 
\begin{equation}
    \label{steady_error}
\lim_{t\rightarrow\infty}\|{\mathsf{e}}_{q(t)}\|\leq \frac{\sup_{\zeta,t}\|P_{(\mathsf{e}_{\sigma})}\|_2}{\sqrt{\lambda_{\min}(M)\ \lambda_{\min}(\Lambda)}}\frac{\sup_{\zeta,t}\|d\|}{\alpha}.
\end{equation}
\end{theorem}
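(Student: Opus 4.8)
The plan is to run a standard Lyapunov/contraction argument on the reduced dynamics obtained by substituting $\zeta = Sv$ into the constrained Lagrangian system, and then convert the resulting ISS bound on $\mathsf{e}_v$ into a bound on $\mathsf{e}_q$ using the composite-variable definition \eqref{composite_variable}. First I would derive the closed-loop error dynamics: substituting the controller \eqref{designed_input} through the decoupling identity \eqref{decoupling_attitude_matrix} into \eqref{Redundant_EMFF}, and using $\overline{M}=S^\top M S$, $\overline{C}=S^\top(M\dot S+CS)$ from \eqref{EMFF_dynamics_null_space}, the $S^\top$-projected dynamics become $\overline{M}\dot{\mathsf{e}}_v + \overline{C}\mathsf{e}_v = -S^\top\!\mathrm{sym}(B_{\mathrm{e/f}}K)S\,\mathsf{e}_v + S^\top(\hat d - d)$, where the $\zeta_r$ feedforward terms cancel the $M\dot\zeta_r + C\zeta_r$ part exactly. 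Here I rely on the fact that $u_{\mathrm{far}}$ in \eqref{designed_input} is realized exactly by the applied dipole set $\{[\mu]\mid \frac{\mu_0}{8\pi}Q_\zeta[\mu]=u_{\mathrm{far}}\}$, so no allocation residual enters.

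Next I would take $V = \tfrac12\mathsf{e}_v^\top \overline M\,\mathsf{e}_v$ and differentiate. The skew-symmetry property $\dot{\overline M} - 2\overline C$ skew (inherited from the Lagrangian structure) kills the $\overline C$ and $\dot{\overline M}$ cross terms, leaving $\dot V = \mathsf{e}_v^\top(-S^\top\mathrm{sym}(B_{\mathrm{e/f}}K)S)\mathsf{e}_v + \mathsf{e}_v^\top S^\top(\hat d - d)$. Invoking the gain condition \eqref{inequality_contraction}, $\dot V \le -\alpha\,\mathsf{e}_v^\top \overline M\,\mathsf{e}_v + \|S^\top\mathsf{e}_v\|\,\|\hat d - d\|$, i.e. $\dot V \le -2\alpha V + (\text{linear in }\sqrt V)\cdot\sup_{\zeta,t}\|d\|$ after bounding the residual (and, if $\hat d$ is a perfect estimate, $\hat d - d$ is whatever residual disturbance remains — the statement uses $\sup_{\zeta,t}\|d\|$, so I would absorb $\hat d$ into that term). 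A comparison-lemma / standard ISS argument then gives $\limsup_{t\to\infty}\|\mathsf{e}_v\|_{\overline M} \le \sup_{\zeta,t}\|d\|/\alpha$ up to the $\|S\|$ factor, hence $\limsup\|\mathsf{e}_v\| \le \sup\|d\|/(\alpha\sqrt{\lambda_{\min}(\overline M)})$; since $\lambda_{\min}(\overline M)\ge\lambda_{\min}(M)$ (as $S$ can be taken with orthonormal columns, or at worst this introduces a harmless constant), this matches the $\sqrt{\lambda_{\min}(M)}$ in \eqref{steady_error}.

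The final step converts the $\mathsf{e}_v$ bound to the $\mathsf{e}_q$ bound. From \eqref{composite_variable}, $P_{(\mathsf{e}_\sigma)}S\mathsf{e}_v = P_{(\mathsf{e}_\sigma)}\zeta - P_{(\mathsf{e}_\sigma)}\zeta_d + \Lambda\mathsf{e}_q = \dot{\mathsf{e}}_q + \Lambda\mathsf{e}_q$ (using $\dot{\mathsf{e}}_q = P_{(\mathsf{e}_\sigma)}(\zeta-\zeta_d)$ from \eqref{EMFF_dynamics_null_space}), so $\mathsf{e}_q$ is the output of a stable first-order filter $\dot{\mathsf{e}}_q = -\Lambda\mathsf{e}_q + P_{(\mathsf{e}_\sigma)}S\mathsf{e}_v$ driven by $P_{(\mathsf{e}_\sigma)}S\mathsf{e}_v$; its steady-state gain is $\|\Lambda^{-1}\|\le 1/\lambda_{\min}(\Lambda)$, which combined with $\|P_{(\mathsf{e}_\sigma)}S\mathsf{e}_v\|\le \sup_{\zeta,t}\|P_{(\mathsf{e}_\sigma)}\|_2\,\|\mathsf{e}_v\|$ (absorbing $\|S\|$) yields exactly \eqref{steady_error}. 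The main obstacle I anticipate is bookkeeping the $B_{\mathrm{e/f}}$ / $S^\top$ algebra so that the feedforward really cancels and the residual disturbance term has the clean form $S^\top(\hat d - d)$ — in particular verifying that the new tangent space $S_{(n,m)}$ from \eqref{new_tangent_space} still satisfies $S^\top B\,\overline B_r^{-1}=I$-type identities needed to make $u_r$ well-posed, and that the decoupling \eqref{decoupling_attitude_matrix} holds on the closed-loop trajectory (i.e. $\mathrm{null}(G_\zeta^\top)\subseteq\mathrm{null}(Q_\zeta^\top)$ is genuinely used). Establishing global (not just local) exponential convergence also requires that these rank/projection conditions hold uniformly along trajectories, which I would state as a standing assumption.
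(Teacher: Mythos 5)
Your proposal is correct and follows essentially the same route as the paper: the same reduced closed-loop dynamics $\overline{M}\dot{\mathsf{e}}_v+\overline{C}\mathsf{e}_v=-S^\top B_{\mathrm{e/f}}KS\mathsf{e}_v+S^\top d$, the same use of the skew-symmetry of $\dot{\overline{M}}-2\overline{C}$ together with the gain condition in Eq.~(\ref{inequality_contraction}) and the comparison lemma, and the same cascade $\dot{\mathsf{e}}_q=-\Lambda \mathsf{e}_q+P_{(\mathsf{e}_{\sigma})}S\mathsf{e}_v$ to convert the $\mathsf{e}_v$ bound into Eq.~(\ref{steady_error}). The only cosmetic difference is that the paper phrases the energy step as a partial-contraction analysis on a virtual system with displacement $\partial_\mu y$ and tracks $\|\sqrt{M}S\mathsf{e}_v\|$ directly (which sidesteps your need to ``absorb'' $\|S\|$), whereas you use the equivalent direct Lyapunov function $\tfrac{1}{2}\mathsf{e}_v^\top\overline{M}\mathsf{e}_v$.
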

\begin{proof}
See Appendix~\ref{proof_NODA_MMH} or \cite{liu2020robust,tsukamoto2021contraction} for state analysis.
\end{proof}
\noindent
Lemma~\ref{quadratic_upper_bound} guarantees power optimal allocation minimize the upper-bound of $\sup_{t}\|{d}_{j}^{2\omega}\|$. We minimize the upper bound in Eq.~(\ref{steady_error}) by maximizing $\alpha$ through optimization of $S_{(n,m)}$ and $K$ under Eq.~(\ref{inequality_contraction}). This study limits $S_{(n,m)}=\mathcal{S}_0$ in 
\cite{takahashi2022kinematics} to convexify Eq.~(\ref{inequality_contraction}) where $
\mathcal{S}_{0}\triangleq[I;-C^{B_{m}/I}A_{(:,\mathrm{end}-3:\mathrm{end})}]
$ and $v=[\dot{r}^{i};\omega^{b};\xi_{1:m-1}^{b}]$. 
\begin{remark}
If the far-field approximation is valid, i.e., $G_\zeta = Q_\zeta$, the control gain mattrix $K\triangleq K_0+(G_\zeta Q_\zeta^+-I)K$ with a given constant $K_0$ reduces to linear control. This assumes that the higher current-update frequency relative to position changes can remove the nonlinearity of $Q_\zeta$ in this region. This constant gain strategy is suited for reliable connectivity maintenance \cite{takahashi2025distance,takahashi2025anticipatory}, which mitigates the curse of dimensionality problem in spacecraft swarm deployment. 
\end{remark}
\begin{theorem}[NODA-MMH]
For a given upper limit $\overline{w}$ [W], the following convex optimization to minimize $\alpha_{\mathrm{inv}}\triangleq\alpha^{-1}$ derives the sub-optimal gain matrix $K^*$ to minimize tracking error while achieving optimal power consumption:
\begin{equation*}
\label{convex_programming}
\begin{aligned}
&K^*= \argmin_{\substack{K\in\mathbb{P}^{6N\times 6N},\ \hat{\lambda}\in\mathbb{R}^{6(N-1)}, \alpha_{\mathrm{inv}}>0,\ \varepsilon>0,\ \overline{u}>0}}\  \alpha_{\mathrm{inv}} \\
&s.t.\ \left\{
\begin{aligned}
&AM^{-1}BK\mathcal{S}_0 =0,\quad\|\hat{\lambda}\|\leq \overline{u},\quad \|K\mathcal{S}_0\mathsf{e}_v\|\leq \overline{u}\\
& \begin{bmatrix}
\mathcal{S}_0^\top\mathrm{sym}(B_{\mathrm{e/f}}K)\mathcal{S}_0 &\mathcal{S}_0^\top\\
\mathcal{S}_0 & \alpha_{\mathrm{inv}}M^{-1}
\end{bmatrix}\succeq 0\\
&aI_{3 n}+O_{\hat{\lambda}_N}+O_{\hat{\lambda}_N}^{\top}
\succeq 0,\ 
\begin{bmatrix}
\left(\frac{\mu_0}{8\pi}\overline{w} \right)\varepsilon +\hat{\lambda}_N^{\top}u_r&\overline{u}\\
\overline{u}&1\\
\end{bmatrix}\succeq 0
\end{aligned}
\right .\\
\end{aligned}
\end{equation*}
if the duality gap between $\mathcal{OPT}_{\mathrm{AC}}$ and $\mathcal{OPT}^{\mathrm{Dual}}_{\mathrm{AC}}$ is zero.
\end{theorem}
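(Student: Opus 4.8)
The plan is to read the statement as a sequence of reformulations that turn each nonconvex ingredient of Theorem~\ref{theorem_NODA_MMH} into a linear-matrix-inequality (LMI) or second-order-cone constraint, and then to invoke convexity. First I would justify the objective: in the error-ball bound~(\ref{steady_error}), once the plant data $M$, $\Lambda$, $P_{(\mathsf{e}_{\sigma})}$ and the disturbance size $\sup_{\zeta,t}\|d\|$ are fixed, $\lim_{t\to\infty}\|\mathsf{e}_q\|$ is a strictly decreasing function of the contraction rate $\alpha$. Hence minimizing the certified tracking error over admissible gains is equivalent to maximizing $\alpha$, i.e., to minimizing $\alpha_{\mathrm{inv}}\triangleq\alpha^{-1}$ — the stated cost. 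Lemma~\ref{quadratic_upper_bound} takes care of the remaining $\sup\|d\|$ factor through the power budget, so ``small error'' and ``small power'' do not conflict along the $2\omega$-disturbance channel; this is what lets a single program address both.

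Second, I would convexify the contraction inequality~(\ref{inequality_contraction}). Fixing $S_{(n,m)}=\mathcal{S}_0$ makes $B_{\mathrm{e/f}}$ and hence $\mathcal{S}_0^\top\mathrm{sym}(B_{\mathrm{e/f}}K)\mathcal{S}_0$ affine in $K$, so the only nonconvexity is the bilinear term $\alpha\,\mathcal{S}_0^\top M\mathcal{S}_0=\mathcal{S}_0^\top(\alpha_{\mathrm{inv}}M^{-1})^{-1}\mathcal{S}_0$; applying the Schur complement (valid since $\alpha_{\mathrm{inv}}M^{-1}\succ O$) rewrites~(\ref{inequality_contraction}) as the $2\times 2$ block LMI in the program, jointly linear in $(K,\alpha_{\mathrm{inv}})$. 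The nonholonomic feasibility constraint is then immediate: Theorem~\ref{Theorem_Kinematics_Control} requires the realized input to lie in $\mathrm{range}(B^{-1}MS)$, equivalently $AM^{-1}B(\cdot)=0$; the feedforward term $u_r$ of~(\ref{designed_input}) is constructed to respect this, so imposing it on the feedback correction $-K\mathcal{S}_0\mathsf{e}_v$ for every $\mathsf{e}_v$ gives exactly the linear equality $AM^{-1}BK\mathcal{S}_0=0$.

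Third — and here the zero-duality-gap hypothesis is essential — I would encode the power budget. The coil currents actually applied solve $\frac{\mu_0}{8\pi}Q_\zeta[\mu]=u_{\mathrm{far}}$ with minimal power, i.e., they realize the optimum of $\mathcal{OPT}_{\mathrm{AC}}$ for the command $u_r-K\mathcal{S}_0\mathsf{e}_v$. Because the gap to $\mathcal{OPT}^{\mathrm{Dual}}_{\mathrm{AC}}$ in~(\ref{dual_problem}) vanishes, that minimal power equals the dual value $-\hat{\lambda}_N^\top u^a/(\mu_0/8\pi)$ at some dual-feasible $\hat{\lambda}_N$ with $P_{(k_{\lambda},\hat{\lambda}_N)}\succeq O$; substituting the coefficients $C^{\mathrm{sin},\mathrm{cos}}_{jklm}$ returned by the learned models $\mathfrak{m},\mathfrak{g}$ (constant at the current linearization point) makes $P_{(k_{\lambda},\hat{\lambda}_N)}$ affine in $\hat{\lambda}_N$, giving the LMI $aI_{3n}+O_{\hat{\lambda}_N}+O_{\hat{\lambda}_N}^\top\succeq O$. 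Requiring this value not to exceed $\overline{w}$, together with the common effort bound $\overline{u}$ on both $\|K\mathcal{S}_0\mathsf{e}_v\|$ and $\|\hat{\lambda}\|$, closes under one further Schur complement into the last block LMI, with the slack $\varepsilon>0$ absorbing the learned-model and implementation errors (bounded via $\overline{\mathcal{F}_{\mathrm{Lip}}}$ of subsection~\ref{MLP_introduction}). All constraints are now LMIs or norm (SOCP) constraints in $(K,\hat{\lambda},\alpha_{\mathrm{inv}},\varepsilon,\overline{u})$ and the cost is linear, so the program is an SDP; any minimizer $K^*$ is the gain with the largest $\alpha$ among those meeting the contraction, nonholonomic and power conditions — optimal for the convex restriction $S=\mathcal{S}_0$ and with the conservative slack $\varepsilon$, hence ``sub-optimal''.

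The main obstacle I anticipate is the third step: converting ``the power-optimal allocation consumes at most $\overline{w}$'' — a statement about the value of the nonconvex bilinear program $\mathcal{OPT}_{\mathrm{AC}}$, whose command $u_{\mathrm{far}}=u_r-K\mathcal{S}_0\mathsf{e}_v$ varies with $\mathsf{e}_v$ — into one convex constraint. Zero duality gap lets the value be certified by a dual-feasible multiplier, but one must still (i) argue that a single $\hat{\lambda}_N$ can certify the bound uniformly over the operating region, or over the worst-case $\mathsf{e}_v$ summarized by $\overline{u}$, and (ii) check that the $\varepsilon$-slack carrying the learned-model and quantization error keeps the bound conservative while preserving linearity in $\hat{\lambda}_N$. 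The contraction and nonholonomic steps are, by comparison, routine Schur-complement and range-space manipulations.
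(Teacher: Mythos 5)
Your overall architecture matches the paper's: the objective $\alpha_{\mathrm{inv}}$ is motivated by the error ball in Eq.~(\ref{steady_error}), the contraction condition in Eq.~(\ref{inequality_contraction}) becomes the first block LMI by a Schur complement once $S=\mathcal{S}_0$ is fixed, and the power budget is certified through the dual problem in Eq.~(\ref{dual_problem}) under the zero-duality-gap hypothesis. You also correctly locate the hard step. But at exactly that step there is a genuine gap: the dual certificate gives $\frac{\mu_0}{8\pi}w_{\mathrm{true}}=-\lambda_N^{\top}(u_r-K\mathcal{S}_0\mathsf{e}_v)$, which contains the product $\lambda_N^{\top}K\mathcal{S}_0\mathsf{e}_v$ of two decision variables, and your proposal never says how this bilinearity is removed. ``Closes under one further Schur complement'' does not act on a bilinear form; the final $2\times 2$ LMI is the Schur complement of the scalar inequality $-\hat{\lambda}_N^{\top}u_r+\overline{u}^2\leq\frac{\mu_0}{8\pi}\varepsilon\overline{w}$, and you cannot reach that inequality without first decoupling $\lambda_N$ from $K\mathcal{S}_0\mathsf{e}_v$.

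The paper's mechanism is Young's inequality, $2x^{\top}y\leq\varepsilon\|x\|^2+\varepsilon^{-1}\|y\|^2$ with $x=\lambda_N$ and $y=K\mathcal{S}_0\mathsf{e}_v$, which turns the cross term into $\frac{\varepsilon\|\lambda_N\|^2}{2}+\frac{\|K\mathcal{S}_0\mathsf{e}_v\|^2}{2\varepsilon}$; the bound is tightest at $\varepsilon^{\star}=\overline{u}/\overline{\lambda}$ under the caps $\|\lambda_N\|\leq\overline{\lambda}$ and $\|K\mathcal{S}_0\mathsf{e}_v\|\leq\overline{u}$, and the rescaling $\hat{\lambda}=\varepsilon^{\star}\lambda$ then simultaneously produces the two norm constraints of the program and the relaxed power inequality that Schur-complements into the last LMI. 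This is also why $\varepsilon$ multiplies $\overline{w}$ there. Your proposal instead assigns $\varepsilon$ the role of absorbing learned-model and quantization error, which is not its function in this theorem (those errors are treated separately, in Corollary~\ref{Corollary_Error_Bound_Control}); with that reading the last constraint cannot be derived. Your concern (i) --- whether a single $\hat{\lambda}_N$ certifies the bound uniformly over the operating region --- is a fair observation, but the paper only uses the certificate pointwise at the current state, so it is not needed to reproduce the stated result.
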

\begin{proof}
The primal objective $w_{\mathrm{true}}$ is equal to the dual lower bound, and we relax the nonconvex constraint $w_{\mathrm{true}}\leq \overline{w}$ to 
$\frac{\mu_0}{8\pi} w_{\mathrm{true}}=-{\lambda}_N^{\top} (u_r -K\mathcal{S}_0\mathsf{e}_v)\leq -{\lambda}_N^{\top}u_r +\frac{\varepsilon\|{\lambda}_N\|^2}{2} +\frac{\|K\mathcal{S}_0\mathsf{e}_v\|^2}{2\varepsilon}$ 
where we use $0\leq \|\sqrt{\varepsilon} x-\sqrt{\varepsilon}^{-1}y\|^2 \Leftrightarrow 2x^\top y \leq \varepsilon\|x\|^2 + \varepsilon^{-1}\|y\|^2$.
The quadratic upper bound is tightest when $\varepsilon^\star=\overline{u}/\overline{\lambda}$ with $\|{\lambda}_N\|\leq \overline{\lambda}$ and $\|K\mathcal{S}_0\mathsf{e}_v\|\leq \overline{u}$. Then, multiplying $\varepsilon^\star$ above inequality and introducing $\hat{\lambda}=\varepsilon^\star \lambda$ yield 
$
-\hat{\lambda}_N^{\top}u_r +\overline{u}^2\leq \frac{\mu_0}{8\pi} \varepsilon^\star\overline{w}$, $\|\hat{\lambda}\|\leq \overline{u}$, and $\|K\mathcal{S}_0\mathsf{e}_v\|\leq \overline{u}$
Applying the Schur complements \cite{boyd1994linear} derives results.
\end{proof}
\section{Experimental Evaluation of Time-Integrated Magnetic Control Toward Orbit Proof}
This section validates the critical aspects of time-interated magnetic control through the experimental setups in Fig.~\ref{fig:Gravity_Compensation_Experiment}.
\subsection{Experimental Setup and Coil Design in Appendix~\ref{Testbed_Design_and_Gravity_Compensation_Test}}
We briefly summarize the experimental setup and custom-made MTQ design resutl in Appendix~\ref{Testbed_Design_and_Gravity_Compensation_Test}. The testbed consists of a 1.5m linear air track (Eisco, PH0362A) with an air blower (PH0363A) and a single-axis air bearing. Relative distance and absolute attitude are measured using a ToF sensor (VL53L0X) and an AR marker, respectively. 
MTQs are mainly traveling along the track, while one MTQ is mounted on the air bearing for frictionless single-axis rotation. The time synchronization for time-integrated control in subsection~\ref{Time_Integrated_Control_of_Alternating_Current} is achieved by the Pulse Per Second signal of GPS, and its accuracy is 0.1ms. 
We limit the input vector, aligning with the movable directions to enable partial validation of 6-DoF control feasibility. The fundamental control experiments by $u_{\mathrm{far}}$ in Eq.~(\ref{exact_input_model}) yield stable results around the target state as shown in Fig.~\ref{fig:Gravity_Compensation_Experiment}. 
\begin{figure}[tb]
  \centering
  \hspace*{-0.3cm} 
  \begin{minipage}[b]{\columnwidth}
    \centering
    \subfloat[Setups for distance, position, and angle control.]{
      \includegraphics[width=0.9\linewidth]{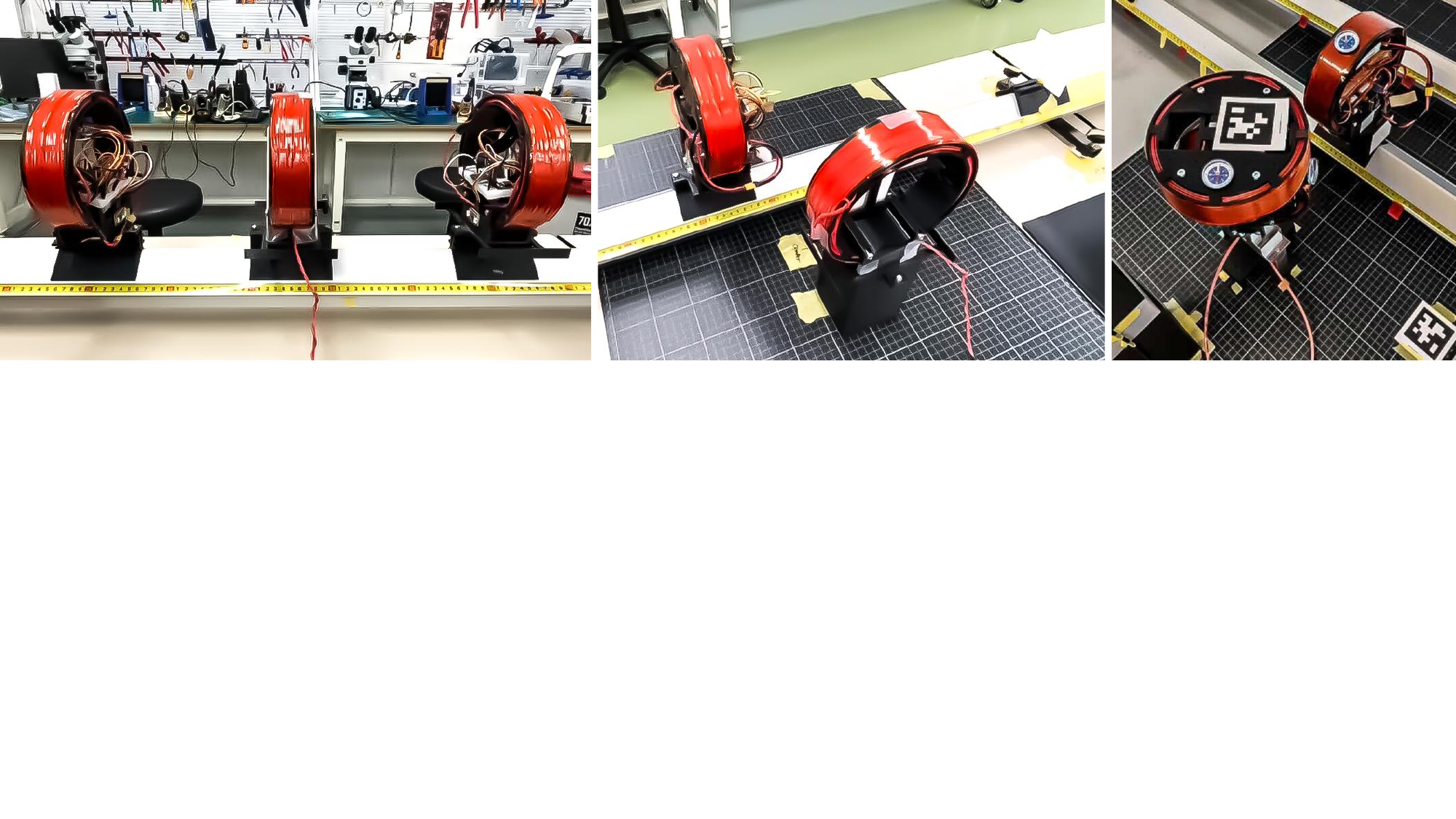}
      \label{experimant_setup_picture}
    }
  \end{minipage}\\ 
\hspace*{-0.3cm} 
  \begin{minipage}[b]{0.245\columnwidth}
    \centering
    \subfloat[Distance control.]{
      \includegraphics[width=1.025\linewidth]{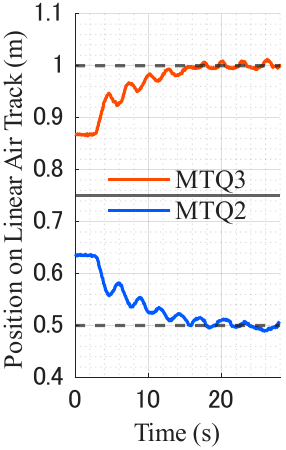}
      \label{fig:token_1d}
    }
  \end{minipage}
  \begin{minipage}[b]{0.415\columnwidth}
    \centering
    \subfloat[Position control.]{
      \includegraphics[width=1.025\linewidth]{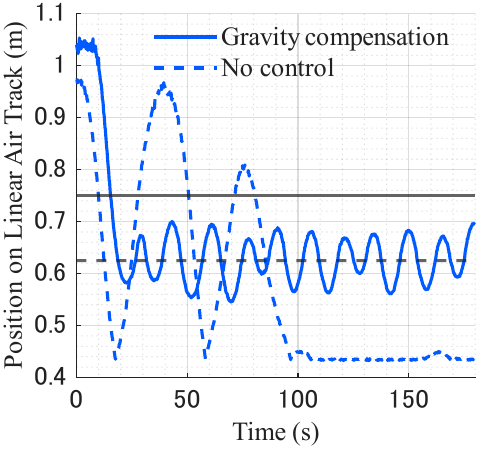}
      \label{fig:token_2d}
    }
  \end{minipage}
  \begin{minipage}[b]{0.305\columnwidth}
    \centering
    \subfloat[Angle control.]{
      \includegraphics[width=1.05\linewidth]{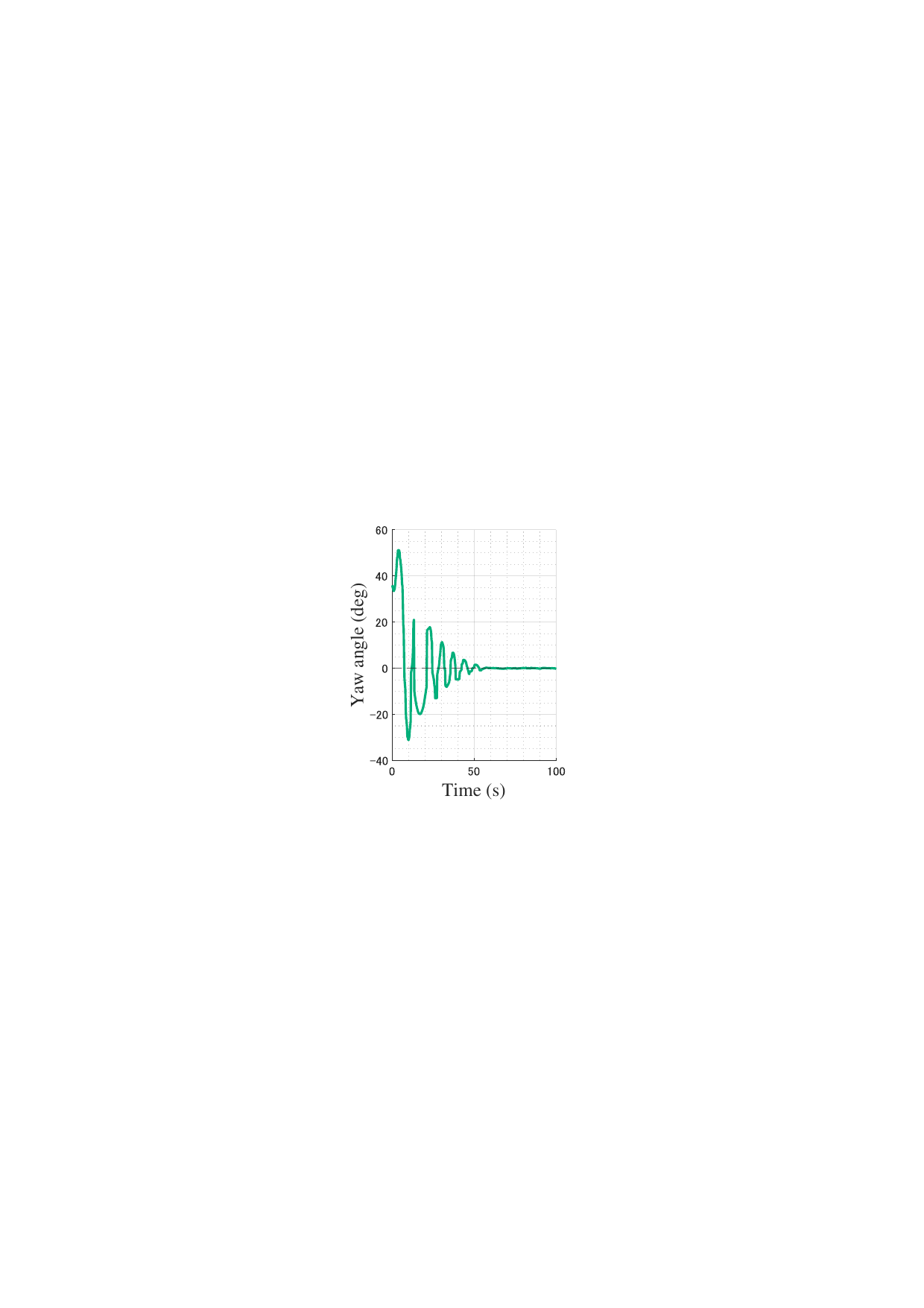}
      \label{fig:token_2d}
    }
  \end{minipage}
  \caption{Testbeds and gravity compensation results.}
  \label{fig:Gravity_Compensation_Experiment}
\end{figure}
\subsection{Unstable Dynamics Normalization for Orbital Dynamics}
\label{Evaluation_Criteria_Toward_Orbit_Proof}
We next outline the evaluation criteria for the ground experiment to partially ensure dynamic equivalence between the ground testbed and the on-orbit system. For a given constant target position $p_d\in\mathbb{R}^N$, 
we define the error states as $p_e=p-p_d$ and $v_e=v-v_d$. Given a fixed interaction graph $\mathcal{G}$ and the associated incidence matrix $E$, the edge states of the agents in the ground experiment are defined as $\{\mathsf{e}_{p_e},\mathsf{e}_{v_e}\}=E^\top \{p_e,v_e\}$, and $\dot{\mathsf{e}}_{v_{e}}=E^\top(u+d)$. For given $u=-K_p{\mathsf{e}}_{p_{e}}-K_d{\mathsf{e}}_{v_{e}}$, its closed-loop system is 
\begin{equation}
\label{ground_dynamics}
\begin{bmatrix}
    \dot{\mathsf{e}}_{v_{e}}\\
    \dot{\mathsf{e}}_{p_{e}}
\end{bmatrix}
\triangleq
\underbrace{\begin{bmatrix}
    -E^\top K_{d}&-E^\top K_p\\
    I&O\\
\end{bmatrix}}_{A_{\mathrm{gnd}}}
\begin{bmatrix}
    {\mathsf{e}}_{v_{e}}\\
    {\mathsf{e}}_{p_{e}}
\end{bmatrix}
+
\begin{bmatrix}
    E^\top d\\
    0    
\end{bmatrix}.
\end{equation}

To address MTQ control degradation by unstable part of relative orbital dynamics, we consider  the feedback control \cite{takahashi2025distance} to attenuate the drift terms $\mathsf{e}_{1}=E^\top (-2C_1)$ and $\mathsf{e}_{4}=E^\top C_4$ in Eq.~(\ref{relative_orbital_dynamics}) and the closed-loop system is 
\begin{equation*}
\label{relative_orbital_dynamics}
\begin{bmatrix}
    \dot{\mathsf{e}}_{1}\\
    \dot{\mathsf{e}}_{4}
\end{bmatrix}=
\underbrace{\begin{bmatrix}
   A_{11}&O\\
     \frac{\epsilon_2}{2}\left(I    
     +\frac{k_1}{k_A}A_{22} \right )&A_{22}
\end{bmatrix}}_{A_{\text{orb}}}
\begin{bmatrix}
    {\mathsf{e}}_{1}\\
    {\mathsf{e}}_{4}
\end{bmatrix}
-k_0
\begin{bmatrix}
    E^\top {D_y}\\
    0
\end{bmatrix}
\end{equation*}
where $A_{11}=-({k_A}/{2})L_e$, $A_{22}=\gamma A_{11}$, $k_0
\approx 1.8e^{3}$. 
We consider a coordinate transformation $\Theta\in\mathbb{R}^{2n\times 2n}$:
\begin{equation}
\label{coordination_matrix}
\begin{bmatrix}
\mathsf{e}_1\\
\mathsf{e}_4
\end{bmatrix}
\triangleq
\begin{bmatrix}
\Theta_{11} & \Theta_{12}\\
O & \Theta_{22}
\end{bmatrix}
\begin{bmatrix}
    {\mathsf{e}}_{v_{e}}\\
    {\mathsf{e}}_{p_{e}}
\end{bmatrix}\quad\mathrm{s.t.}\quad\Theta A_{\text{gnd}}=\beta A_{\text{orb}}\Theta
\end{equation}
where $A_{\text{orb}}$ is from Appendix~\ref{relative_orbital_dynamics} and \cite{takahashi2025distance}, 
$\beta$ is the disturbance ratio $\beta\approx {\|\Theta_{11} E^\top d\|_{\infty}}/{\|k_0 E^\top D_y\|_\infty}$ and $\tau \triangleq \beta t$.
\begin{theorem}[Unstable Dynamics Normalization]
\label{theorem_Dynamics_Normalization}
$\Theta$ in Eq.~(\ref{coordination_matrix}) exists for the closed-loop system applied 
$f_c=-k_v L\left(v-v_r\right)$ where 
$k_v \triangleq \beta k_A(1+\gamma )/{2}$, $v_r\triangleq v_d-\frac{\beta k_A\gamma}{2(1+\gamma)} (L p_e)\triangleq v_d-\frac{k_p}{k_v} (L p_e)$: $\Theta_{11}=\sum_{m=0}^{r}\alpha_m L_e^{m}$ and 
\begin{equation}
    \label{Theta_11}
\Theta_{12}
=\frac{\beta k_A\gamma}{2}L_e \Theta_{11},\ \Theta_{22}=\frac{\beta \epsilon_2}{2}\left(I    
     -\frac{\gamma k_1}{2}L_e \right )\Theta_{11}
\end{equation} 
\end{theorem}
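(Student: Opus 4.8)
The plan is to verify by direct substitution that the block‑upper‑triangular $\Theta$ of Eq.~(\ref{coordination_matrix}) with the blocks prescribed in Eq.~(\ref{Theta_11}) intertwines the closed‑loop ground flow with the rescaled orbital flow, i.e.\ that it solves the homogeneous Sylvester equation $\Theta A_{\mathrm{gnd}}=\beta A_{\mathrm{orb}}\Theta$, and then to identify the scalar $\beta$ from the forcing channels. First I would insert the proposed law $f_c=-k_vL(v-v_r)$ with $v_r=v_d-\tfrac{k_p}{k_v}Lp_e$ into the edge dynamics $\dot{\mathsf e}_{v_e}=E^\top(u+d)$ underlying Eq.~(\ref{ground_dynamics}); writing $L=EE^\top$ and using $E^\top L=L_eE^\top$ for the edge Laplacian $L_e=E^\top E$, the homogeneous part of the closed loop collapses to $A_{\mathrm{gnd}}=\begin{bmatrix}-k_vL_e&-k_pL_e^{2}\\ I&O\end{bmatrix}$. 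The observation that makes the rest routine is that $A_{11}=-\tfrac{k_A}{2}L_e$, $A_{22}=\gamma A_{11}$, the lower‑left block $B_{21}=\tfrac{\epsilon_2}{2}\bigl(I+\tfrac{k_1}{k_A}A_{22}\bigr)$ of $A_{\mathrm{orb}}$, both blocks of $A_{\mathrm{gnd}}$, and all blocks of $\Theta$ are polynomials in the single symmetric matrix $L_e$, hence pairwise commuting.

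Next I would expand $\Theta A_{\mathrm{gnd}}=\beta A_{\mathrm{orb}}\Theta$ into its four $n\times n$ block identities. The $(1,1)$ block gives $\Theta_{12}=(\beta A_{11}+k_vL_e)\Theta_{11}$, and substituting $A_{11}=-\tfrac{k_A}{2}L_e$ with the prescribed $k_v=\tfrac{\beta k_A(1+\gamma)}{2}$ collapses it to $\Theta_{12}=\tfrac{\beta k_A\gamma}{2}L_e\Theta_{11}$, exactly the first formula of Eq.~(\ref{Theta_11}). The $(2,1)$ block gives $\Theta_{22}=\beta B_{21}\Theta_{11}$, and since $A_{22}=\gamma A_{11}=-\tfrac{\gamma k_A}{2}L_e$ this is precisely $\Theta_{22}=\tfrac{\beta\epsilon_2}{2}\bigl(I-\tfrac{\gamma k_1}{2}L_e\bigr)\Theta_{11}$, the second formula. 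The two remaining blocks are consistency conditions that must fall out of the prescribed gains rather than constrain $\Theta_{11}$: the $(1,2)$ block reads $-k_pL_e^{2}\Theta_{11}=\beta A_{11}\Theta_{12}=-\tfrac{\beta^{2}k_A^{2}\gamma}{4}L_e^{2}\Theta_{11}$, which holds because $k_p=\tfrac{k_p}{k_v}k_v=\tfrac{\beta k_A\gamma}{2(1+\gamma)}\cdot\tfrac{\beta k_A(1+\gamma)}{2}=\tfrac{\beta^{2}k_A^{2}\gamma}{4}$; the $(2,2)$ block reduces to $B_{21}\Theta_{12}+A_{22}\Theta_{22}=O$, which upon substitution equals $\tfrac{\beta\epsilon_2k_A\gamma}{4}\bigl[(I-\tfrac{\gamma k_1}{2}L_e)L_e-L_e(I-\tfrac{\gamma k_1}{2}L_e)\bigr]\Theta_{11}=O$ by commutativity. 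Hence the intertwining holds for \emph{every} $\Theta_{11}=\sum_{m=0}^{r}\alpha_mL_e^{m}$, so existence is immediate, and for the canonical choice $r=0$, $\alpha_0=1$ the map $\Theta$ is a genuine coordinate change, being block‑triangular with invertible diagonal blocks whenever $I-\tfrac{\gamma k_1}{2}L_e$ is nonsingular.

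It remains to pin $\beta$, the only genuinely non‑algebraic step. Applying $y=\Theta[\mathsf e_{v_e};\mathsf e_{p_e}]$ and the time rescaling $\tau=\beta t$ to the forced ground system, the forcing $[E^\top d;0]$ is carried to $\beta^{-1}[\Theta_{11}E^\top d;0]$, and requiring this to reproduce the orbital drift–disturbance term $-k_0[E^\top D_y;0]$ of $A_{\mathrm{orb}}$ forces, in the $\infty$‑norm sense, $\beta\approx\|\Theta_{11}E^\top d\|_\infty/\|k_0E^\top D_y\|_\infty$, exactly the ratio defined before the statement. The main obstacle I anticipate is not the homogeneous algebra — which collapses once one notices everything is a polynomial in $L_e$ — but (i) carrying out the incidence‑matrix bookkeeping correctly so that the $(1,2)$ block of $A_{\mathrm{gnd}}$ is $-k_pL_e^{2}$ rather than $-k_pL_e$, which is precisely what the quadratic gain relation $k_p=\beta^{2}k_A^{2}\gamma/4$ is tuned to absorb, and (ii) verifying that the rescaled, $\Theta$‑transformed ground closed loop reproduces not just the Jordan structure of $A_{\mathrm{orb}}$ but also its drift coefficients $\epsilon_2,\gamma,k_1$ and disturbance gain $k_0$ from the appendix, which is what legitimizes the interpretation of $\beta$ as a meaningful disturbance ratio.
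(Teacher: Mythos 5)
Your proposal is correct and follows essentially the same route as the paper's Appendix proof: both expand the intertwining condition $\Theta A_{\mathrm{gnd}}=\beta A_{\mathrm{orb}}\Theta$ into its four block identities, exploit that $\Theta_{11}$, $A_{11}$, $A_{21}$, $A_{22}$, $E^\top K_d=k_vL_e$, and $E^\top K_p=k_pL_e^{2}$ are all polynomials in $L_e$ and hence commute, read off $\Theta_{12}$ and $\Theta_{22}$ from two of the blocks, and reduce the remaining two to the gain conditions $E^\top K_d=\beta\tfrac{k_A(1+\gamma)}{2}L_e$ and $E^\top K_p=\tfrac{\beta^{2}k_A^{2}\gamma}{4}L_e^{2}$ (equivalently $\beta^{2}A_{11}^{2}+\beta A_{11}E^\top K_d+E^\top K_p=0$), which the prescribed $k_v,k_p$ satisfy. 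The only difference is one of direction — you verify the blocks given the stated gains while the paper derives the gains from the nonsingularity requirement on $\Theta_{11}$ — and your added identification of $\beta$ from the forcing channel matches the definition the paper places outside the proof.
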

\begin{proof}
See Appendix~\ref{proof_Dynamics_Normalization}.
\end{proof}
\noindent
Multiplying $\Theta$ and $\beta^{-1}$ on the left side of Eq.~(\ref{ground_dynamics}) derives
\begin{equation*}
\begin{aligned}
\frac{\mathrm{d}}{\mathrm{d}(\beta t)}
\begin{bmatrix}
\mathsf{e}_1\\
\mathsf{e}_4
\end{bmatrix}
&=
A_{\text{orb}}
\begin{bmatrix}
\mathsf{e}_1\\
\mathsf{e}_4
\end{bmatrix}
+\frac{1}{\beta}
\begin{bmatrix}
   \Theta_{11} E^\top d\\
    0    
\end{bmatrix}
\end{aligned}
\end{equation*}
The timescale transformation $t_{\mathrm{gnd}} = t_{\mathrm{orb}}/\beta$ adjusts the command update period $T_s$, whereas $\Theta$ provides a one-to-one mapping between ground- and orbital-frame errors. 
\subsection{Experimental Evaluation of Learned Magnetic Model}
As a fundamental part of NODA-MMH, we investigate the effectiveness of the learning-based magnetic field model $\mathfrak{g}$ in Eq.~(\ref{coil_geometry_model_expression}) through experiments. We design the PID controller as command values of force and torque, and implement two dipole allocations to achieve command through magnetic field interactions: the dipole approximation model and the learned model, referred to as "inv" and "cgl", respectively. In our experiment, to compensate for the microgravity effect indicated by the black line in Fig.~\ref{fig_proximately_control_1_2}, the magnetic field model is inferred to update coil current every $\mathrm{d}t_{\mathrm{gnd}}=187.5$ ms.
\subsubsection{Theoretical Bound Verification for Position Control}
We first conduct position control for far-range and proximity operation to compare the two dipole allocations. Through experiments, the steady state error satisfies the error bound in Eq.~(\ref{result}) and Appendix~\ref{Proof_Error_Bound_Control} due to time-integrated control-induced disturbances ${d}_{j\leftarrow k}^{2\omega}$ that is indicated by the dashed line.
\begin{figure}[tb]
\centering
\begin{minipage}[b]{0.49\columnwidth}
\centering
\includegraphics[width=1.0\linewidth]{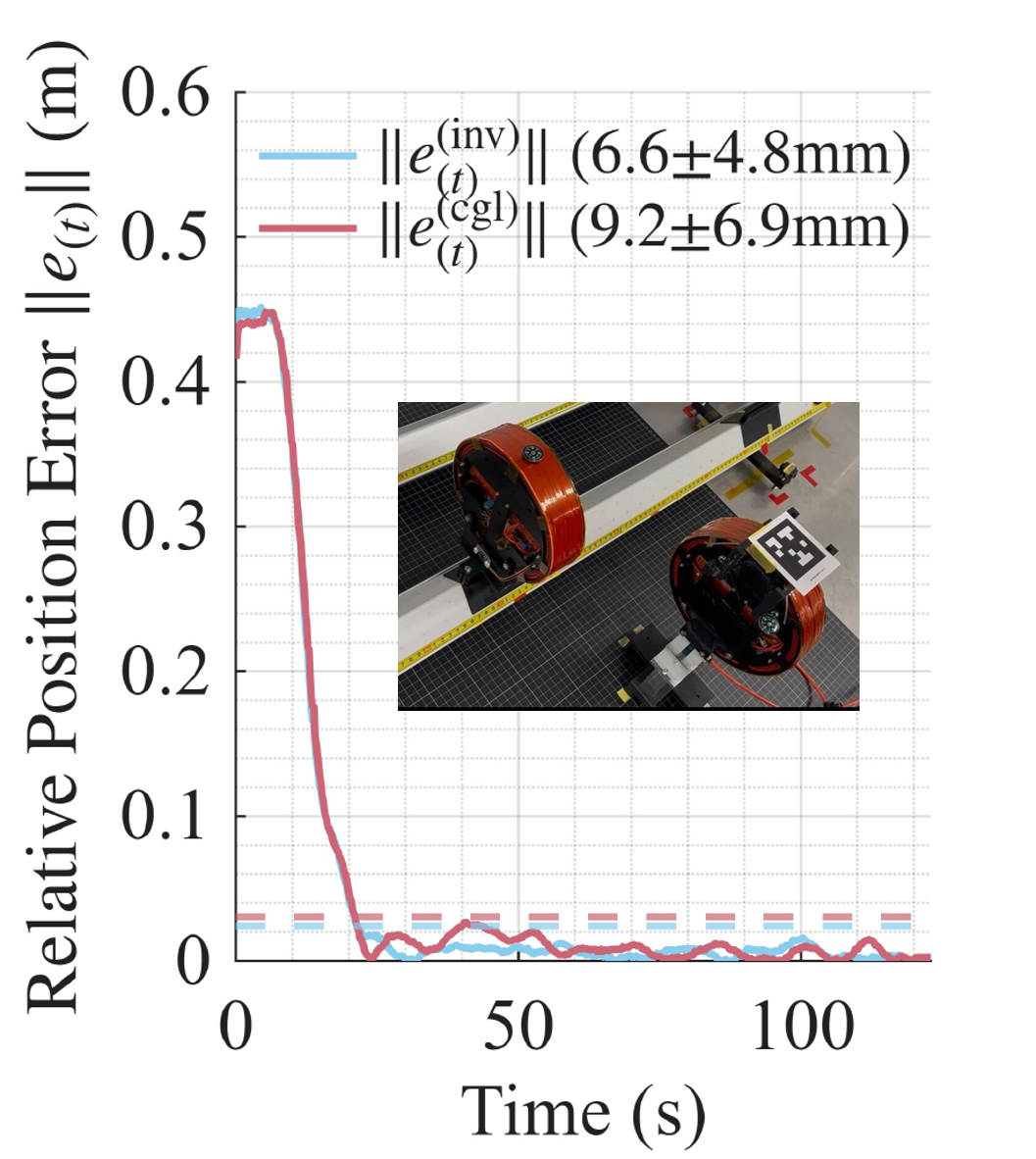}\label{fig:attitude}
\end{minipage}
\begin{minipage}[b]{0.49\columnwidth}
\centering
\includegraphics[width=1.0\linewidth]{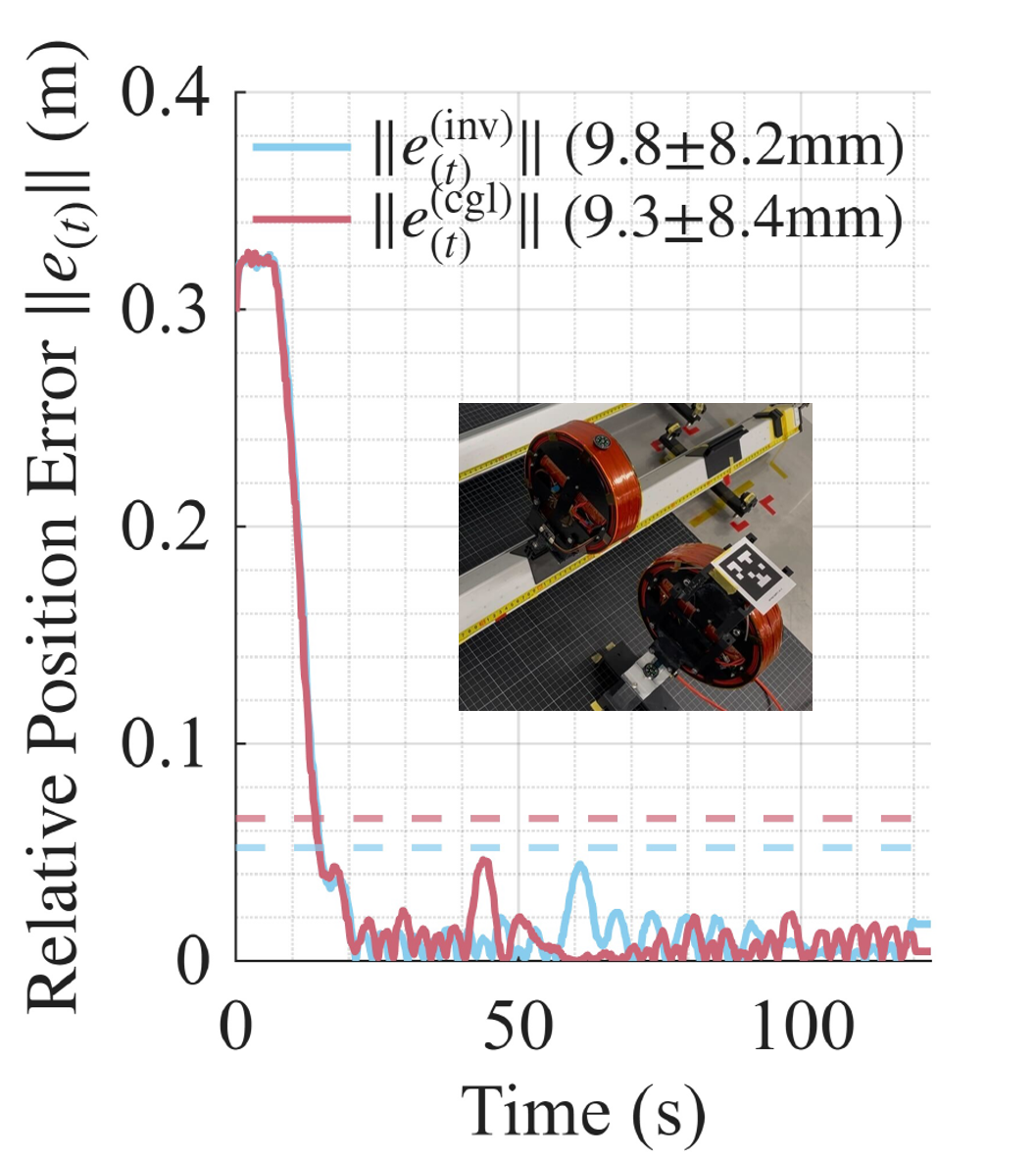}\label{fig:attitude}
\end{minipage}
\caption{Experimental results for two far target positions.}
\label{two_initial_condition_results_for_far_range_operation}
\end{figure}
\noindent
For the far-range operation in Fig.~\ref{two_initial_condition_results_for_far_range_operation}, the dipole approximation is relatively valid, and the learning model successfully demonstrates equivalent performance. At closer ranges, the model error of the dipole approximation produces a large position error bound control as shown in Fig.~\ref{fig_proximately_control_1_2}. The learning model shows a tighter error and bound. 
\begin{figure}[tb!]
\centering
\includegraphics[width=1.0\columnwidth]{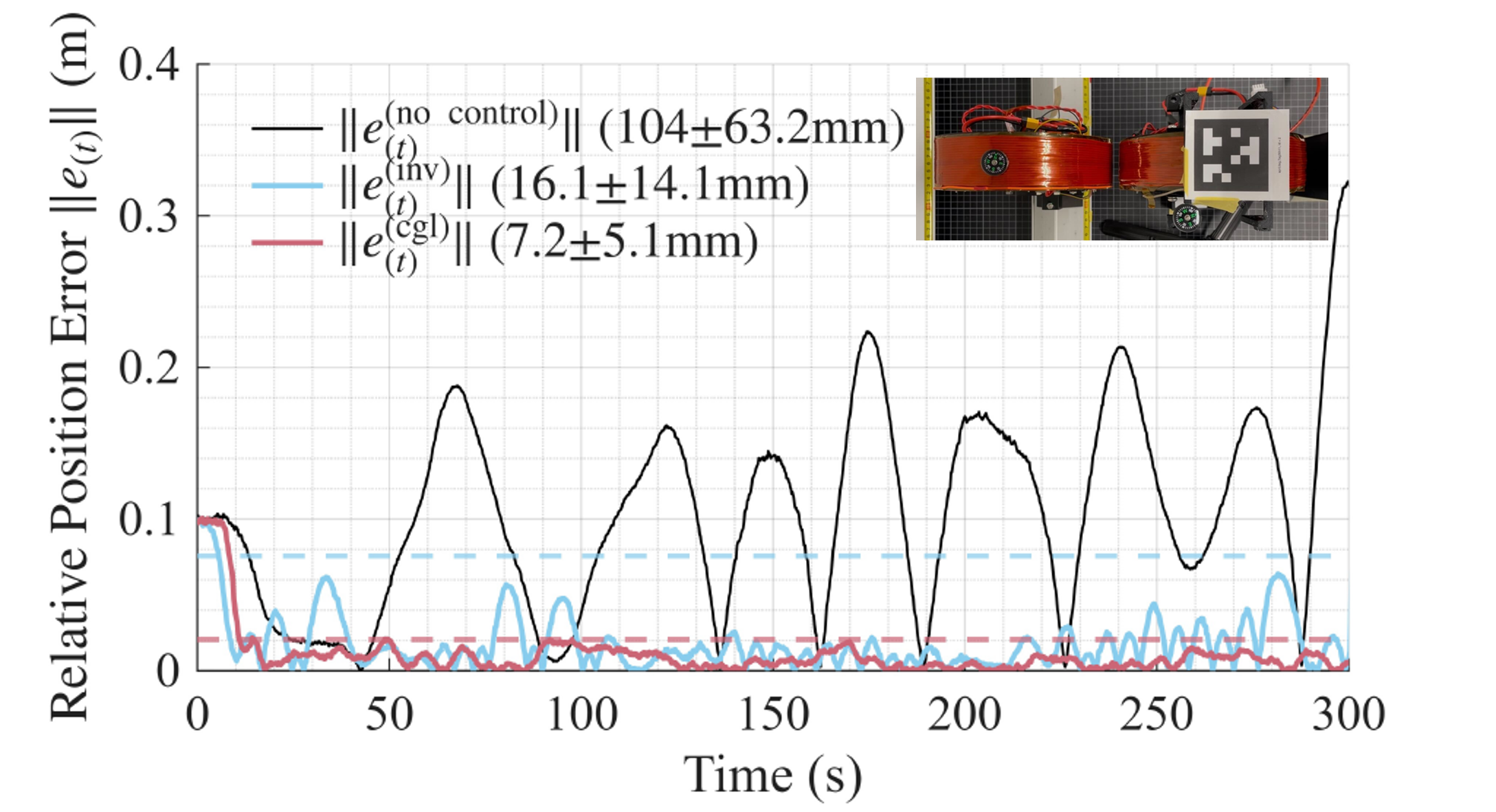}\label{fig_proximately_control_1}
\caption{Experimental results of proximity position control.}
\label{fig_proximately_control_1_2}
\end{figure}
\subsubsection{Simultaneous Control Trial: Two Satellite Docking}
We extend the experiment in Fig.~\ref{fig_proximately_control_1_2} for simultaneous control of position and attitude in Fig.~\ref{experiment_formation_and_angle_control_results}. The learning-based model successfully predicts the exact magnetic field for proximity operation and realizes the rapid convergence of the states. The dipole model-based control degrades the convergence rates due to its model error.
\begin{figure}[tb]
\centering
\begin{minipage}[b]{0.49\columnwidth}
\centering
\includegraphics[width=1\linewidth]{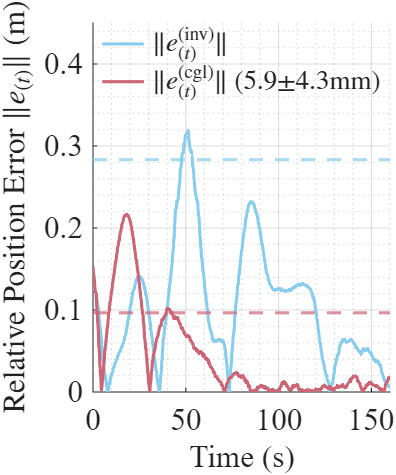}\label{fig:attitude}
\end{minipage}
\begin{minipage}[b]{0.49\columnwidth}
\centering
\includegraphics[width=1\linewidth]{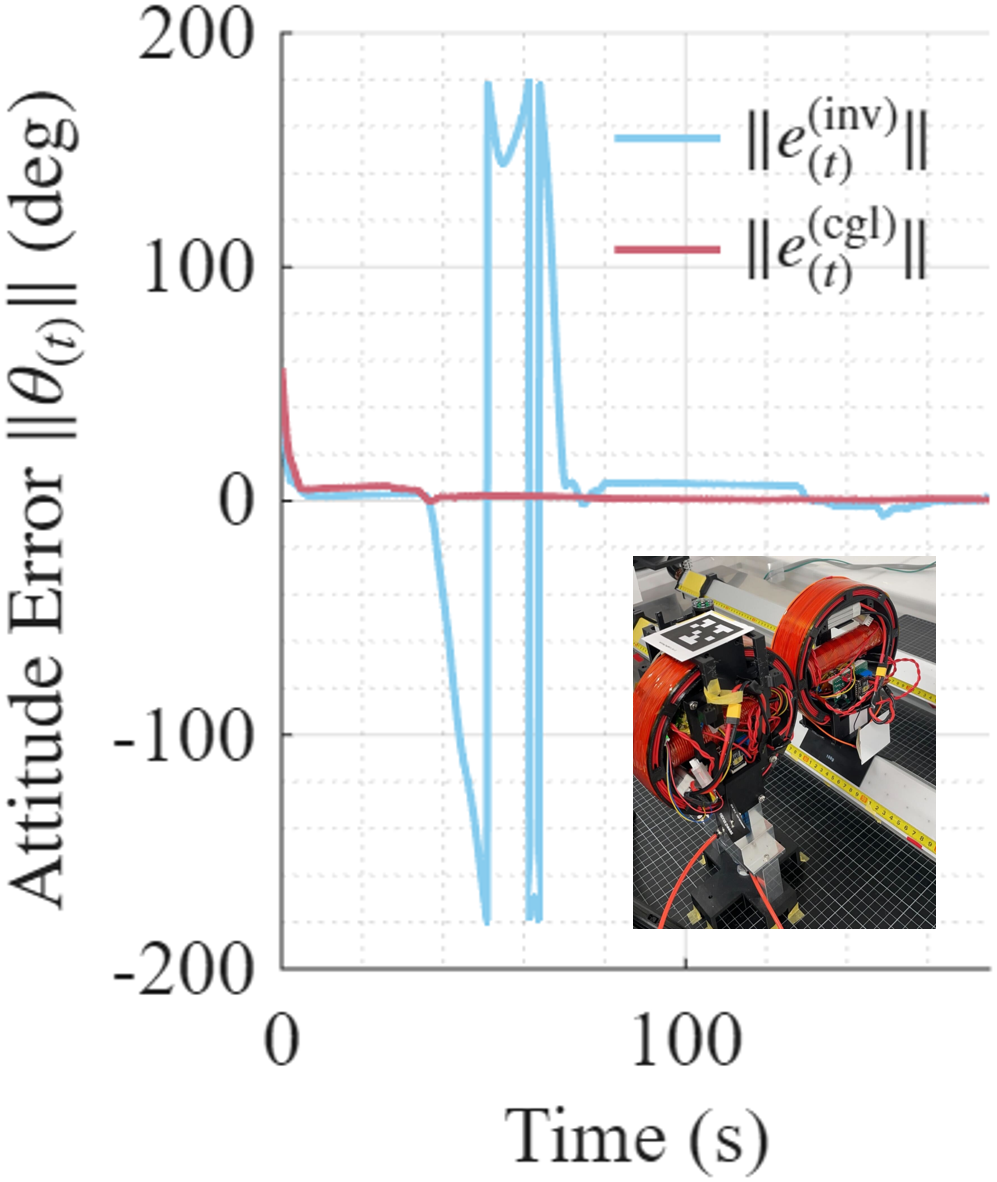}\label{fig:attitude}
\end{minipage}
\caption{Experimental results for position and attitude control.}
\label{experiment_formation_and_angle_control_results}
\end{figure}
\subsubsection{Scalability Trial: Three Satellite Formation Control}
We testify to the decentralized dipole allocation for the three satellites in Fig.~\ref{three_satellite_formation_control_results}. The pairs S/C1-2 and S/C2-3 are controlled by different frequencies in a decentralized manner. Their formation converges to the equilateral triangle formation.
\begin{figure}[tb]
\centering
\begin{minipage}[b]{0.49\columnwidth}
\centering
\includegraphics[width=1\linewidth]{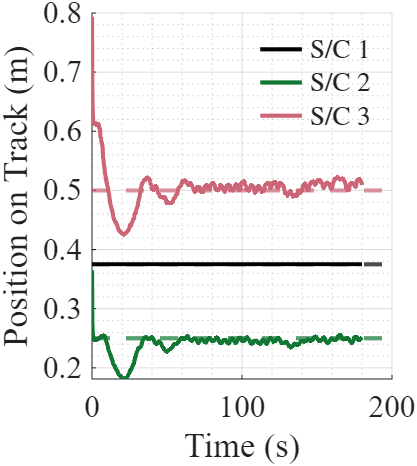}\label{fig:attitude}
\end{minipage}
\begin{minipage}[b]{0.49\columnwidth}
\centering
\includegraphics[width=1\linewidth]{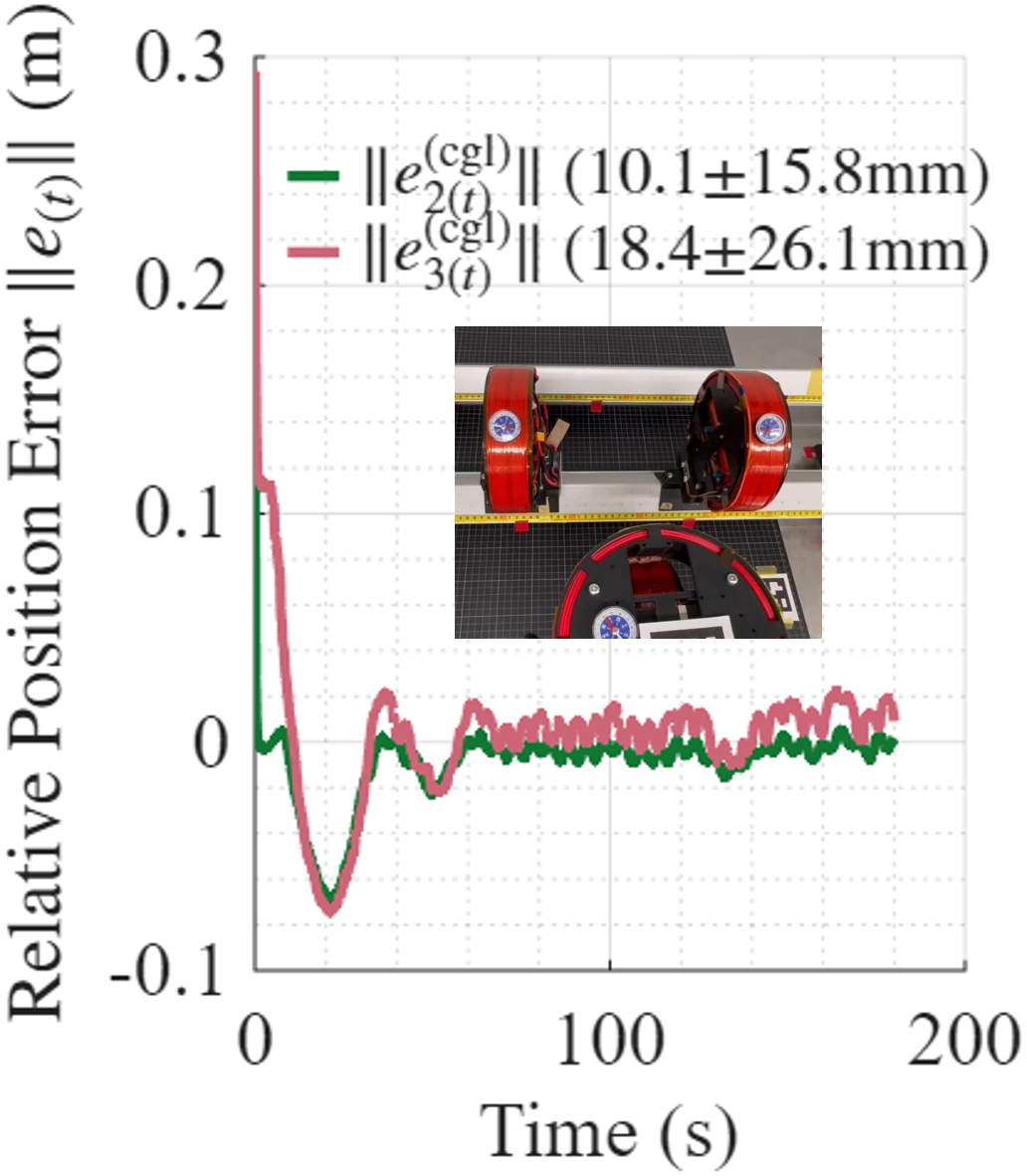}\label{fig:attitude}
\end{minipage}
\caption{Three coils control using decentralized dipole allocation.}
\label{three_satellite_formation_control_results}
\end{figure}
As an averaged $J_2$ orbital parameters error analysis, We choose $\Theta_{11}=\theta_{11}I$ and apply $\|E^\top D_y\|_\infty\approx  1e^{-8}$ \cite{komatsu2025real}, $\mathrm{d}t_{\mathrm{orb}}=10$ \cite{takahashi2025distance}, $k_0\approx 1.8e^{3}$ \cite{takahashi2025distance} to our parameters $\mathrm{d}t_{\mathrm{gnd}}=0.185$. These derive $\beta=55.6$ in Eq.~(\ref{coordination_matrix}) and, thus, $\theta_{11}
\approx {1e^{-3}}/{\| E^\top d\|_{\infty}}$ applied to the result in Fig.~\ref{fig:AOS}. Figure~\ref {fig:AOS} shows the connectable time $T_{\mathrm{conn}(t)}$ \cite{takahashi2025distance,takahashi2025scalable} with S/C 1 under no control execution. This evaluation is effective for the distance-based nature of magnetic field interaction.
\begin{figure}[tb]
\centering
  \centering
 \includegraphics[width=0.9\linewidth]{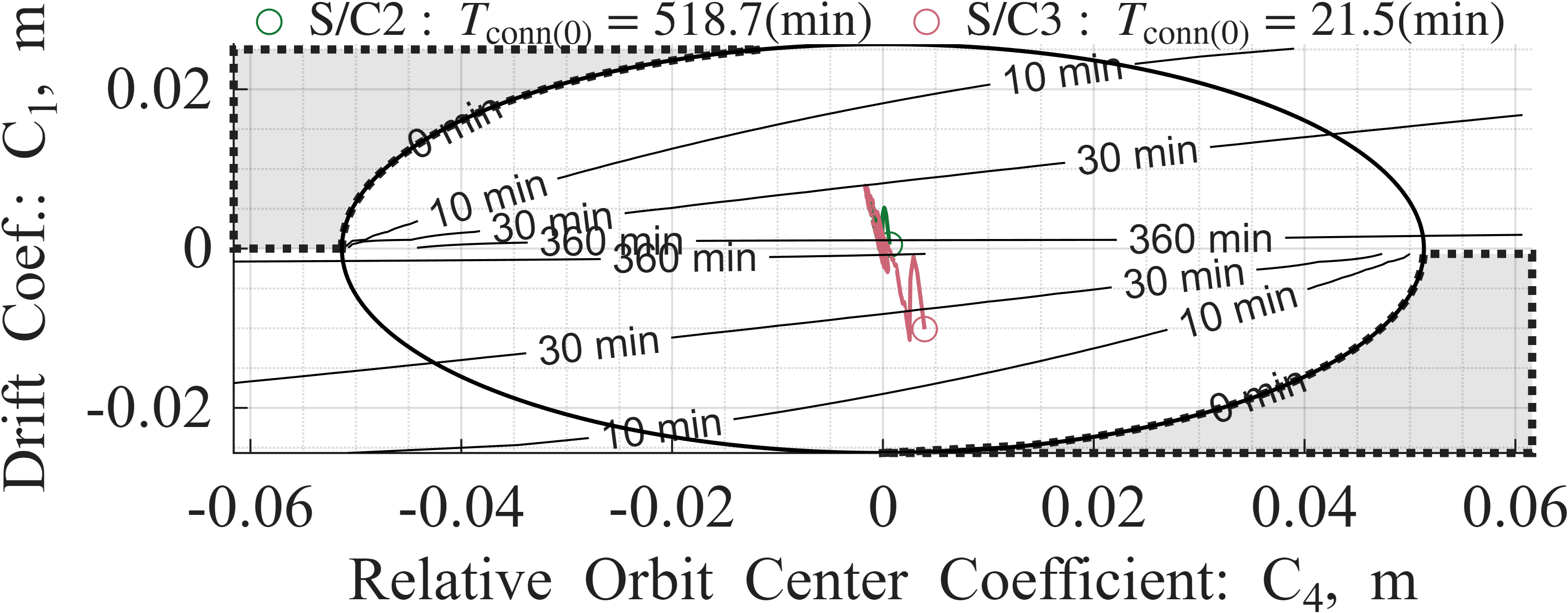}
  \caption{Connectable time analysis \cite{takahashi2025distance,takahashi2025scalable} at orbital dynamics.}
\label{fig:AOS}
\end{figure}
\section{Conclusion}
This study experimentally validates the principle of large-scale satellite swarm control through learning-aided magnetic field interactions generated by satellite-mounted magnetorquers. We introduce a power-optimal swarm control framework. Through the experimental validation, we validate critical aspects of time-varying magnetic actuation and learned magnetic model as the foundation of our framework. 
\appendix
\subsection{Proof: Theorem~\ref{theorem_NODA_MMH} (Input-to-State Stability Analysis)}
\label{proof_NODA_MMH}
\begin{proof}
Applying the dipole moment in Eq.~(\ref{designed_input}) into Eq.~(\ref{EMFF_dynamics_null_space}) yields the closed-loop system: $\overline{M}\dot{\mathsf{e}}_v+\overline{C}\mathsf{e}_v=-S^\top B_{\mathrm{e/f}}KS\mathsf{e}_v+S^\top d$.
For this nonlinear system, the partial contraction analysis \cite{wang2005partial} introduces the virtual dynamics of the parameterized path $y_{(\mu)}$ for $\mu\in[0,1]$: $\overline{M}_{\left(q_s\right)} \dot{y}_{(\mu)}+\overline{C}_{\left(q_s, v\right)}y_{(\mu)}=-S^\top B_{\mathrm{e/f}}KS y_{(\mu)}+\mu S^\top d$ where $y_{(0)}=0$ and $y_{(1)}=s$ are solutions for the unperturbed desired states and the described closed loop system states, respectively. We define the virtual displacement $\partial_\mu y={\partial y}/{\partial\mu}$ and its differential equation is $\overline{M}\partial_\mu \dot{y}+\overline{C} \partial_\mu y=-S^\top B_{\mathrm{e/f}}KS \partial_\mu y+S^\top d$. 
Then, for $V=\delta V_{l}^2$ and $\delta V_{l}=\|\sqrt{M}{S}{\partial_\mu y}\|$, the inequality in Eq.~(\ref{inequality_contraction}) $
(2\delta V_{l})\delta \dot{V}_{l}
\leq (2\delta V_{l})^\top(-\alpha\delta V_{l}+\sqrt{M}^{-1}d)
$ where ${\partial_\mu y^\top}(\dot{\overline{M}}-2\overline{C}){\partial_\mu y}=0$. This yields the differential inequality $\delta \dot{V}_{l}\leq -\alpha \delta V_{l} + \sqrt{M}^{-1}\sup_{t}\overline{\|d(t)\|}$ and all system trajectories in this virtual system converge exponentially fast to a single trajectory at a rate of $\alpha$. Applying the comparison lemma \cite{khalil2002nonlinear} into the path integral $V_{l(y,\delta y,t)}=\int_0^1\delta V_{l}\mathrm{d}\mu\geq \sqrt{\lambda_{\min}(M)}\|{S}\mathsf{e}_{v(t)}\|$ yields $\|{S}\mathsf{e}_{v(t)}\|\leq (V_{l(0)}e^{-\alpha t}+({\sup_{\zeta,t}\|d\|}/\alpha)(1-e^{-\alpha t}))/\sqrt{\lambda_{\min}(M)}$ and we get $\|{S}\mathsf{e}_{v(t)}\|\leq ({\sup_{\zeta,t}\|d\|}/\alpha)/\sqrt{\lambda_{\min}(M)}$. Based on the result in subsection~\ref{MRP_definition}, we obtain error dynamics of position and attitude: $\dot{\mathsf{e}}_q
=P_{(\mathsf{e}_{\sigma})}(Sv_r-\zeta_d)+P_{(\mathsf{e}_{\sigma})}S\mathsf{e}_{v(t)}
=-\Lambda \mathsf{e}_q+P_{(\mathsf{e}_{\sigma})}S\mathsf{e}_{v(t)}$. Applying the comparison lemma \cite{khalil2002nonlinear} to this equation yields the steady error in Eq.~(\ref{steady_error}).
\end{proof}
\subsection{Coil Design Flow and Experimental Requirements}
\label{Testbed_Design_and_Gravity_Compensation_Test}
\subsubsection{Sequential Magnetorquer Parameter Selection}
We have identified the following decision factors in coil design 
and others become dependent:
\begin{itemize}
    \item Inner and outer coil diameters $\underline{D},\overline{D}$ [m] for multilayer diameter $D_{\mathrm{coil}}=(\underline{D}+\overline{D})/2$ and height $H_{\mathrm{coil}}$ [m]
    \item Operational distance $r$ [m] with target and initial $r_d,r_0$
    \item Circuit voltage $V_{\mathrm{cir}}$ [V], Upper current $\overline{c}_{\mathrm{wire}}$ [A]
\end{itemize}
We select the appropriate wire to satisfy the design requirements based on the following material properties: 
${k}_{\Omega/kg} [\Omega/kg]$, $k_{\Omega/m} [\Omega/m]$, $\overline{c}_{\mathrm{wire}}[A]$, $D_{\mathrm{wire}} [m]$. We first calculate the coil resistor ${\Omega}_{\mathrm{coil}}$ to satisfy the current and mass constraint, and the dependent parameters follows: 
$$
\begin{aligned}
&\text{Coil resistor [$\Omega$]:\ }{\Omega}_{\mathrm{coil}}=\min (\overline{V}_{\mathrm{cir}}/\overline{c}_{\mathrm{wire}},\overline{m}_{\mathrm{coil}}{k}_{\Omega/kg})\\
&\left\{
\begin{aligned}
\text{The number of wiring [-]: }&N_t = {\Omega}_{\mathrm{coil}}/k_{\Omega/m}/(\pi D_{\mathrm{coil}})\\
\text{Maximum dipole [A/m$^2$]: }&\overline{\mu}= \pi (D_{\mathrm{coil}}/2)^2 N_t\overline{c}_{\mathrm{wire}}\\
\text{Thickness of coil wire [m]: }&t_{coil}=N_t/(H_{coil}/D_{wire})
\end{aligned}
\right.
\end{aligned}
$$
\subsubsection{Design Optimization for Experimental Constraint}
\begin{table}[tb!]
\centering
\caption{The 1U model and experimental MTQ designs.}
\begin{tabular}{c|c|c}
\hline
Parameter & Optimization \cite{shim2025feasibility}&1U model\\ \hline
Satellite/Coil size
&   11/7.5cmcm  & 10/7.5cm\\ \hline
Mass/Coil
& 0.54/0.10kg  & 0.34/0.12kg\\ \hline
\end{tabular} 
\label{tab:1U_MTQ_model}
\begin{tabular}{c|c|c|c|c|c}
\hline\hline
Diameters&$N_t$&$\overline{\mu}$&$I_z$&R&$\omega\ [rad/s]$\\\hline
14-16cm&120&12.5& 6.9$e^{-4}$&2&8$\pi$[1,2,3,4,5]\\\hline
\end{tabular} 
\label{tab:experimental_coil_design}
\end{table}
Our initial experiments are conducted under a far-field approximation to evaluate the feasibility of the proposed coil design with low computational costs. 
Then, the coil design requirements and specific values for our study are as follows: 
\begin{itemize}
    \item Size constraint 
    to hold the dipole model such as $r_d,r_0$ is greater than $2.5D_{\mathrm{coil}}$, e.g., ${r_d=3D_{\mathrm{coil}},\ r_0=r_d+0.1}$.
    \item Disturbance force $a_d$ [N/kg] by the linear track distortions $|a_d|\approx 2(9.8 \sin_{(0.01^{\circ})})\leq 4e^{-3}$. 
    \item Maximum weight $\overline{m}_{\mathrm{MTQ}}\approx 1.15$: that the linear air tracks can support 
    from measurements.
    \item 
    Discrete drive voltage $\overline{V}_{\mathrm{cir}}$ and mass of LiPo battery.
\end{itemize}
We solved the optimization to maximize acceleration. 
$$
\begin{aligned}
D_{\mathrm{coil}}^*,\\
\overline{V}_{\mathrm{cir}}^*
\end{aligned}
=\argmax_{\substack{D_{\mathrm{coil}},V_{\mathrm{cir}}\in\mathbb{R}}}\ \frac{\overline{\mu}^2}{m_{\mathrm{coil}}}{\mathrm{s.t. }\ }\left\{
\begin{aligned}
m_{\mathrm{coil}}&=\frac{\Omega_\mathrm{coil}}{k_{\Omega/kg}} \leq \overline{m}_{\mathrm{coil}}\\ 
F_{(d_0)}&=\mathbf{\frac{1}{2}}\frac{3\mu_0}{2\pi}\frac{\mu^2}{d_0^4} \geq a_d\\
t_{coil}&\leq \frac{D_{\mathrm{coil}}}{6}
\end{aligned}
\right.
$$
where $({V},\overline{c}_{\mathrm{wire}},{k}_{\Omega/kg},\overline{m}_{\mathrm{coil}})$. 
We consider three different wire materials: copper, polyester, and tin-plated copper. For the high current requirement, optimal calculation selects 
a UL1007 AWG20 wire, whose maximum permissible current is 7 A, and an 11V battery supply, as summarized in Table~\ref{tab:experimental_coil_design}. The second coil utilizes an iron core to address size constraints and generate an equivalent magnetic field.
\begin{remark}
Our comprehensive system design for a satellite swarm based on nonconvex optimization \cite{shim2025feasibility} verified that the 1U MTQ shown in Table \ref{tab:1U_MTQ_model} and Fig.~4 meets the long-term formation-keeping requirement for low-Earth orbit communication.
We utilize the larger model to address the effects of microgravity on the linear air track, along with the evaluation criteria introduced in subsection~\ref{Evaluation_Criteria_Toward_Orbit_Proof}.
\end{remark}
\begin{figure}[bt]
    \centering
\includegraphics[width=0.9\linewidth]{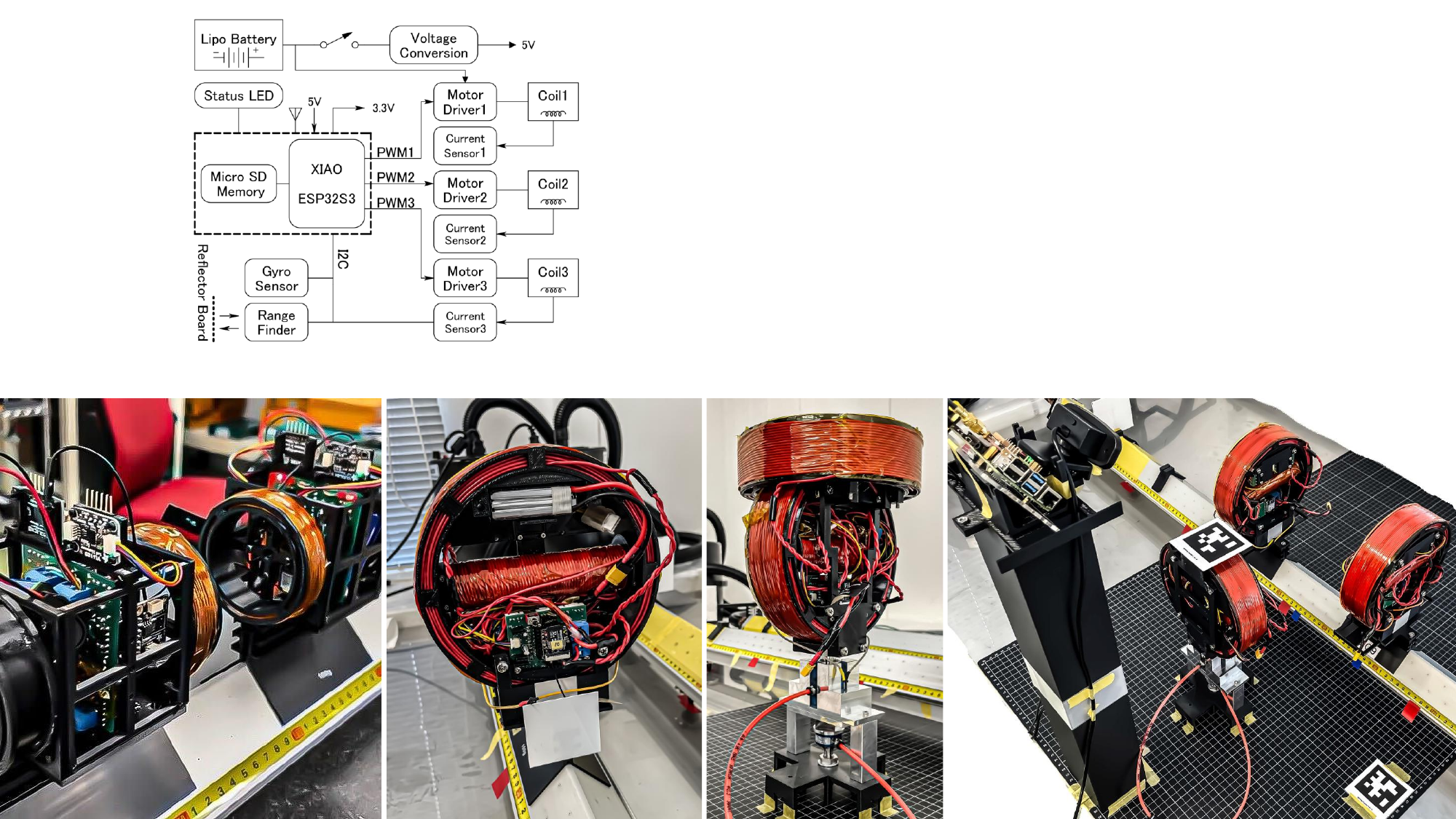}      
  \caption{1U and experimental model on air track and bearing.}
  \label{experiment_picture}
\end{figure}
\subsubsection{Time Constant Requirement for Orbital Operation}
\label{relative_orbital_dynamics}
We describe the relative orbital dynamics of the orbiting satellites using an orbitally fixed system $\mathcal{O}$:
${\mathbf{o}_x}=\mathrm{nor}(\mathbf{r})$, ${\mathbf{o}_y}={\mathbf{o}_z} \times {\mathbf{o}_x}$, and ${\mathbf{o}_z}=(\mathbf{r} \times \dot{\mathbf{r}})/\|\mathbf{r} \times \dot{\mathbf{r}}\|$ where $\mathbf{r}\in\mathbb{R}^3$ represents the position vector from the center of the Earth. We define the relative position of the $j$-th satellite from the $k$th satellite as ${r_{jk}}={r}_j-{r}_k$. 
Analitical solution of the linearized $J_2$ perturbed relative motion dynamics Integrating this yields \cite{takahashi2025distance,takahashi2025scalable}:
$r_{jk(t)}=[{2C_{1}}+r_{xy}\sin{(\omega_{xy} t + \theta_{xy})}/c_+;{C_{4}-\epsilon_{2} C_{1} t}+2r_{xy}\cos{(\omega_{xy} t + \theta_{xy})}/c_-;(r_z+l_z t) \sin{({\omega}_z t+\theta_z)}]$ where 
$c_{\pm}\approx\sqrt{1\pm 1e^{-4}}$,  
$\epsilon_2\approx(3+5e^{-4})\omega_{xy}$, ${\omega}_{xy}=c_-\sqrt{{\mu}/{r_{\mathrm{ref}}^3}}$, $C_{1,4}$ is the averaged $J_2$ relative orbital parameters about the drift motion and relative orbit center \cite{takahashi2025distance,takahashi2025scalable}:
$$
C_{1}=({c_+}/{c_-^2})\left(2 \overline{x}+{\dot{\overline{y}}}/{ {\omega}_{xy}}\right),\ C_{4}=({1}/{c_-})\left(\overline{y}-{2 \dot{ \overline{x}}}/{{\omega}_{xy}}\right).
$$
This drift term $C_1$ contributes to MTQ control degradation. 
\subsection{Proof: Theorem~\ref{theorem_Dynamics_Normalization} (Unstable Dynamics Normalization)}
\label{proof_Dynamics_Normalization}
\begin{proof}
The constraint in Eq. ~(\ref{coordination_matrix}) is equivalent to 
$$
\begin{aligned}
&\text{(i)} -\Theta_{11}E^\top K_d+\Theta_{12}= \beta A_{11}\Theta_{11},\ \text{(ii)} \Theta_{22}= \beta A_{21}\Theta_{11}\\
&\text{(iii)} -\Theta_{11}E^\top K_p = \beta A_{11}\Theta_{12},\ \text{(iv)} 0      = A_{21}\Theta_{12}+A_{22}\Theta_{22}\bigr.
\end{aligned}
$$
Since (iii) and (iv) limit that $\Theta_{11}$ commute with $L_e$ such as an invertible polynomial function $\Theta_{11}=P(L_e)$ and this determines $\Theta_{12,22}$ in Eq.~(\ref{Theta_11}). Substituting $\Theta_{12}$ into (iii) and (iv) and canceling the commuting, the nonsingular $\Theta_{11}$ condition results $\beta^{2}A_{11}^{2} + \beta A_{11}E^\top K_d + E^\top K_p=0$ and $E^\top K_d=\beta \frac{k_A(1+\gamma )}{2}L_e$, thus, $E^\top K_p=\frac{\beta^2 k_A^2\gamma}{4}L_e^2$.
\end{proof}
\subsection{Proof: Corollary~\ref{Corollary_Error_Bound_Control} (Steady Position Error Bound)}
\label{Proof_Error_Bound_Control}
Toward orbital implementation, we derive the steady position error bound under space radiation. 
\begin{proposition}
Consider the MLP $\mathcal{F}_0$ processed $P$-th order residual quantization for $n_{\mathrm{bit}}$-bit with $p$-th order protection, i.e., $W^l\approx\sum_{i=1}^{P}W^{l(i)}$. If $n_{\mathrm{bf}}$ bit flips occur during $T$, $\|\mathcal{F}_T\|_{\text {Lip }}$ is bounded associated $\overline{\mathcal{F}_{0\text {Lip}}}$ in subsection~\ref{MLP_introduction}
\begin{equation}
\label{Lipschitz_ratio}
\gamma_{\mathcal{F}_T}=\frac{\|\mathcal{F}_T\|_{\text {Lip }}}{\overline{\mathcal{F}_{0\text{Lip}}}}
\leq\ \prod_{l=1}^{L+1}\left(1+\frac{ n_{\mathrm{bf}}\ \max_{i,j} |w_{0ij}^{l}|}{2^{(p n_{\mathrm{bit}}-1)}\ \sigma(W_0^l)}\right)
\end{equation}
\end{proposition}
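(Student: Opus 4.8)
The plan is to push a single-bit-flip perturbation bound through the layer-wise Lipschitz product of subsection~\ref{MLP_introduction}. Write $W_T^l = W_0^l + \Delta W^l$ for the stored $l$-th weight matrix after $T$, where $\Delta W^l$ collects the corruptions of the flips that hit layer $l$. Since the Lipschitz constant of a composition is submultiplicative (the very inequality used to obtain $\overline{\mathcal{F}_{\mathrm{Lip}}}$), I first get $\|\mathcal{F}_T\|_{\text{Lip}}\le \|\phi\|_{\text{Lip}}^L\prod_{l=1}^{L+1}\sigma(W_T^l)$, and then by subadditivity of the spectral norm $\sigma(W_T^l)\le \sigma(W_0^l)+\sigma(\Delta W^l)$. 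Dividing by $\overline{\mathcal{F}_{0\text{Lip}}}=\|\phi\|_{\text{Lip}}^L\prod_l\sigma(W_0^l)$ turns the right-hand side into $\prod_l\bigl(1+\sigma(\Delta W^l)/\sigma(W_0^l)\bigr)$, so everything reduces to showing $\sigma(\Delta W^l)\le n_{\mathrm{bf}}\,\max_{i,j}|w_{0ij}^l|/2^{p n_{\mathrm{bit}}-1}$.

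\noindent
Next I would bound the size of a single corrupted entry. Under $P$-th order residual quantization the stored weight is $\sum_{i=1}^P W^{l(i)}$, with the $i$-th residual level carrying a resolution scaled by $2^{-(i-1)n_{\mathrm{bit}}}$ relative to the coarse level and the coarse dynamic range controlled by $\max_{i,j}|w_{0ij}^l|$. With $p$-th order protection the levels $i\le p$ are error-corrected, so a flip can only land in levels $i\ge p+1$; the most significant flippable bit then perturbs the entry by at most $2^{-(p n_{\mathrm{bit}}-1)}\max_{i,j}|w_{0ij}^l|$ once the ``one bit $=$ half of the level range'' factor is folded in. Hence each flip $k$ in layer $l$ contributes a rank-one update $\delta_k e_{i_k}e_{j_k}^{\top}$ with $|\delta_k|\le \delta^l\triangleq 2^{-(p n_{\mathrm{bit}}-1)}\max_{i,j}|w_{0ij}^l|$. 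Writing $\Delta W^l=\sum_k \delta_k e_{i_k}e_{j_k}^{\top}$ over the flips in that layer and applying subadditivity once more gives $\sigma(\Delta W^l)\le n_{\mathrm{bf}}^{(l)}\delta^l$, and the worst case $n_{\mathrm{bf}}^{(l)}\le n_{\mathrm{bf}}$ (all flips concentrated in one layer) yields $\sigma(\Delta W^l)\le n_{\mathrm{bf}}\delta^l$. Substituting into the product bound from the first step gives exactly Eq.~(\ref{Lipschitz_ratio}); bias flips are excluded by the same protection argument, or else treated as an additive output shift that does not change the Lipschitz constant, so only the weight matrices enter.

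\noindent
The main obstacle is the second step: nailing down the precise factor $2^{p n_{\mathrm{bit}}-1}$ requires fixing the convention by which residual quantization allocates dynamic range across the $P$ levels and identifying the most significant \emph{unprotected} bit — in particular whether the coarse range equals $\max_{i,j}|w_{0ij}^l|$ or twice that, and whether the sign bit is among the protected bits. Everything else (submultiplicativity of $\|\cdot\|_{\text{Lip}}$, subadditivity of $\sigma(\cdot)$, the rank-one aggregation, and the worst-case-layer concentration) is routine. A secondary, acknowledged looseness is that simultaneously charging all $n_{\mathrm{bf}}$ flips to every layer is conservative; distributing them across layers would tighten the constant, but the stated inequality only claims this conservative product form, so I would not pursue the refinement here.
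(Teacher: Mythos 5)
Your proposal is correct and follows essentially the same route as the paper: a layer-wise product bound from submultiplicativity, $\sigma(W_T^l)\le\sigma(W_0^l)+\sigma(\Delta W^l)$ (the paper further relaxes $\sigma(\Delta W^l)$ to $\|\Delta W_T^l\|_F$), and a per-flip magnitude bound charged to the first unprotected residual level, with all $n_{\mathrm{bf}}$ flips conservatively assigned to each layer. The convention you flag as the main obstacle is resolved in the paper exactly as you guessed: the worst-case flip is a sign flip contributing $2\max_{i,j}|w_{0ij}^{l(p+1)}|$, and each residual level shrinks the dynamic range by $2\times 2^{(n_{\mathrm{bit}}-1)}=2^{n_{\mathrm{bit}}}$, so $\max_{i,j}|w_{0ij}^{l(p+1)}|\le \max_{i,j}|w_{0ij}^{l}|/2^{pn_{\mathrm{bit}}}$, which yields the stated factor $2^{pn_{\mathrm{bit}}-1}$.
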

\begin{proof}
The Lipschitz constant of the function $\mathcal{F}$ at time $T$ is bounded as $\|\mathcal{F}_T\|_{\text {Lip }} 
\leq \overline{\mathcal{F}_{\text {Lip}}}\prod_{l=1}^{L+1}(1+{\|\Delta W_T^l\|_F}/{\sigma(W_0^l)})
$. 
Since we assume $p$-th order protection, we have $\|\Delta W_T^l\|_F\leq  n_{\mathrm{bf}}(2\max_{i,j} |w_{0ij}^{l(p+1)}|)$ due to sign flip. Residual quantization error for any matrix $A$ is up to $\max_{i,j} |a_{ij}|/(2\times 2^{(n_{\mathrm{bit}}-1)})$ 
and, thus, $\max_{i,j} |w_{0ij}^{l(p+1)}|\leq \max_{i,j} |w_{0ij}^{l}|/(2\times 2^{(n_{\mathrm{bit}}-1)})^p$. Then, these leads to  
$\gamma_{\mathcal{F}_T}$ in Eq.~(\ref{Lipschitz_ratio}).
\end{proof}
\begin{corollary}[Steady Position Error Bound]
\label{Corollary_Error_Bound_Control}For a given constant gain $K=K_{\mathrm{pos}}\oplus K_{\mathrm{rot}}$, the steady position error is $\lim_{t\rightarrow\infty}\|{\mathsf{e}}_{p(t)}\|\leq {\sup_{t}\|d(t)\|}/{(\alpha\sqrt{{\mathrm{m}}\ {k_p}/{k_d}})}$ where  $\sup_{t}\|d(t)\|=\sup_{t\in t_{\mathrm{steady}}}\sup_{\tau\in[t,t+T)}\|{d}_{j(\tau)}^{2\omega}\|+(L_{G_\zeta} + (1+\gamma_{\mathcal{F}_T})L_{\mathfrak{G}_\zeta})\rho_{r,\sigma}$, $\mathrm{m}$ is the mass, $\alpha ={k_d}/{\mathrm{m}}$, $\rho_{r,\sigma}$ is the covering radius about position and attitude, $\gamma_{\mathcal{F}_T}$ is the degradation ratio in Eq.~(\ref{Lipschitz_ratio}), and Lipschitz constrants $L_{G_\zeta,\mathfrak{G}}$ about true and learned function of inputs using $G_\zeta$.
\end{corollary}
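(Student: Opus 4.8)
The plan is to derive Corollary~\ref{Corollary_Error_Bound_Control} by specializing Theorem~\ref{theorem_NODA_MMH} to the translational double-integrator subsystem driven by the constant PD gain $K=K_{\mathrm{pos}}\oplus K_{\mathrm{rot}}$ (the linear-control reduction discussed in the Remark following Theorem~\ref{theorem_NODA_MMH}, obtained e.g. under the far-field approximation $G_\zeta=Q_\zeta$, where $B_{\mathrm{e/f}}$ acts as the identity on the force channel), lumping the coil-geometry model mismatch and the time-integrated ripple into the disturbance $d$ of Eq.~(\ref{steady_error}), and then bounding $\sup_t\|d(t)\|$ explicitly. Because $K$ is block diagonal the position loop decouples, and I would evaluate each factor of Eq.~(\ref{steady_error}) on that block: $\overline{M}=\mathrm{m}I$ so $\lambda_{\min}(M)=\mathrm{m}$; the composite-variable reference of Eq.~(\ref{composite_variable}) with $P_{(\mathsf{e}_\sigma)}S=I$ and $\zeta_d=v_d$ collapses to $v_r=v_d-\Lambda\mathsf{e}_p$ with $\Lambda=(k_p/k_d)I$, hence $\lambda_{\min}(\Lambda)=k_p/k_d$; the attitude kinematic factor restricted to translation is $\sup\|P_{(\mathsf{e}_\sigma)}\|_2=1$; and the largest $\alpha$ saturating Eq.~(\ref{inequality_contraction}) with $K_d=k_d I$ and $M=\mathrm{m}I$ is $\alpha=k_d/\mathrm{m}$. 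Substituting these into Eq.~(\ref{steady_error}) produces exactly the prefactor $\sup_t\|d(t)\|\,/\,(\alpha\sqrt{\mathrm{m}\,k_p/k_d})$ claimed in the corollary.

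The second step is to decompose the effective disturbance as $d=d_j^{2\omega}+d^{\mathrm{mdl}}$, where $d_j^{2\omega}$ is the time-integrated control-induced ripple already characterized in Lemma~\ref{quadratic_upper_bound} and $d^{\mathrm{mdl}}$ is the residual wrench caused by commanding currents through the learned and quantized coil-geometry map $\mathfrak{g}$ of Eq.~(\ref{coil_geometry_model_expression}) instead of the exact $G_\zeta$. The triangle inequality gives $\sup_t\|d(t)\|\le\sup_t\|d^{2\omega}_{j(t)}\|+\sup_t\|d^{\mathrm{mdl}}_{(t)}\|$. The first term is handled directly by Lemma~\ref{quadratic_upper_bound}: restricting $t$ to the steady-state window $t_{\mathrm{steady}}$, the supremum of $\|d^{2\omega}_{j(\tau)}\|$ over each period $[t,t+T)$ equals the closed form of Eq.~(\ref{result}), which supplies the summand $\sup_{t\in t_{\mathrm{steady}}}\sup_{\tau\in[t,t+T)}\|d_{j(\tau)}^{2\omega}\|$.

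For $d^{\mathrm{mdl}}$ I would run a covering argument over the reachable position-and-attitude set of the formation. Transferring the coil-geometry mismatch through $Q_\zeta^+$ and $B_{\mathrm{e/f}}$ as in Eq.~(\ref{decoupling_attitude_matrix}), the equivalent additive wrench at a generic state $\xi$ is controlled by $\|G_\zeta(\xi)-\widehat{\mathfrak{G}}_\zeta(\xi)\|$, where $\widehat{\mathfrak{G}}_\zeta$ is the network actually executed onboard after residual quantization and $n_{\mathrm{bf}}$ radiation-induced bit flips. Covering that set by balls of radius $\rho_{r,\sigma}$ and bounding the gap on each cell by its value at the cell centre (absorbed into the training accuracy) plus the Lipschitz variation of the two maps, the exact map contributes $L_{G_\zeta}\rho_{r,\sigma}$ while the onboard map contributes $L_{\mathfrak{G}_\zeta}\rho_{r,\sigma}$ inflated by the Lipschitz degradation ratio $\gamma_{\mathcal{F}_T}$ of the Proposition, Eq.~(\ref{Lipschitz_ratio}), to $(1+\gamma_{\mathcal{F}_T})L_{\mathfrak{G}_\zeta}\rho_{r,\sigma}$; summing yields $(L_{G_\zeta}+(1+\gamma_{\mathcal{F}_T})L_{\mathfrak{G}_\zeta})\rho_{r,\sigma}$. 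Adding the two summands and inserting the total into the prefactor of the first step gives the bound asserted in Corollary~\ref{Corollary_Error_Bound_Control}.

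The step I expect to be the main obstacle is the third one: making the covering and Lipschitz bookkeeping tight enough that precisely the coefficient $(1+\gamma_{\mathcal{F}_T})$ emerges. This requires composing the layer-wise spectral-norm bound of subsection~\ref{MLP_introduction} with the quantization-plus-bit-flip estimate of the Proposition to bound $\|\widehat{\mathfrak{G}}_\zeta\|_{\mathrm{Lip}}$ in terms of $L_{\mathfrak{G}_\zeta}$ and $\gamma_{\mathcal{F}_T}$, and then verifying that propagating that bound through $Q_\zeta^+$ and $B_{\mathrm{e/f}}$ introduces no extra state-dependent factors beyond those already folded into $\rho_{r,\sigma}$. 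The remaining steps are routine substitutions into Theorem~\ref{theorem_NODA_MMH}, Lemma~\ref{quadratic_upper_bound}, and the triangle inequality.
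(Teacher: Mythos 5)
Your proposal is correct and follows essentially the same route as the paper's own (very terse) proof: the prefactor comes from specializing Eq.~(\ref{steady_error}) of Theorem~\ref{theorem_NODA_MMH} to the constant-gain position block, the ripple term comes from Lemma~\ref{quadratic_upper_bound} / Eq.~(\ref{result}), and the model-error term comes from the covering-radius Lipschitz bound $\sup_{x\in\mathcal{S}_R}\|\Delta_{\mathrm{NN}(x)}\|\leq\sup_{x'\in\mathcal{T}_R}\|\Delta_{(x')}\|+(L_{G_\zeta}+(1+\gamma_{\mathcal{F}_T})L_{\mathfrak{G}_\zeta})\rho_{r,\sigma}$. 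You merely make explicit the constant substitutions ($\lambda_{\min}(M)=\mathrm{m}$, $\Lambda=(k_p/k_d)I$, $\alpha=k_d/\mathrm{m}$) and the triangle-inequality split of $d$ that the paper leaves implicit.
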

\begin{proof}
We obtain the bound by Eqs.~(\ref{steady_error}) and~(\ref{result}). For a sample region $\mathcal{S}_{R}$ and training set $\mathcal{T}_{R}$, the analytical bound of learning error $\Delta_{\mathrm{NN}(x)}$ after model quantization is $\sup_{x\in \mathcal{S}_{R}}\|\Delta_{\mathrm{NN}(x)}\|\leq\sup_{x'\in\mathcal{T}_{R}}\| \Delta_{(x')}\|+ (L_{G_\zeta} + (1+\gamma_{\mathcal{F}_T})L_{\mathfrak{G}_\zeta})\rho_{r,\sigma}$.
\end{proof}
\section*{Acknowledgments}
This work was partially supported by the JAXA, Space Strategy Fund GRANT Number (JPJXSSF24MS09003), Japan. The first author would like to thank Hannya Yamato, Masaru Ishida, Yusuke Sawanishi, and Takahiro Inagawa at Interstellar Technologies Inc. for their technical support.
\bibliographystyle{IEEEtran}
\bibliography{references_1023_2025}
\end{document}